
\documentclass[10pt,journal,compsoc]{IEEEtran}
%


%

%
\ifCLASSOPTIONcompsoc
  \usepackage[nocompress]{cite}
\else
  \usepackage{cite}
\fi

\usepackage{algorithm, algpseudocode}
\usepackage{multirow}
\usepackage{amsmath}
\usepackage{amsfonts}
\usepackage{enumitem}
\usepackage{graphicx}
\graphicspath{ {./images/} }
\usepackage{makecell}
\usepackage{diagbox}
\usepackage{ctable}
\usepackage{ragged2e}
\usepackage{soul}
\usepackage{url}

\algnewcommand{\Inputs}[1]{%
  \State \textbf{Inputs:}
  \Statex \hspace*{\algorithmicindent}\parbox[t]{.8\linewidth}{\raggedright #1}
}
\algnewcommand{\Initialize}[1]{%
  \State \textbf{Initialize:}
  \Statex \hspace*{\algorithmicindent}\parbox[t]{.8\linewidth}{\raggedright #1}
}
\usepackage{amsthm}
\newtheorem{theorem}{Theorem}[section]

\newtheorem{proposition}[theorem]{Proposition}
\newtheorem{lemma}[theorem]{Lemma}
\newtheorem{definition}[theorem]{Definition}
%

%
\ifCLASSINFOpdf
\else
\fi
\hyphenation{op-tical net-works semi-conduc-tor}

\newcommand*{\xdash}[1][0.5em]{\rule[0.5ex]{#1}{0.55pt}}

\begin{document}
%
\title{WOOD: Wasserstein-based Out-of-Distribution Detection}
%
%
%
%

\author{Yinan Wang, Wenbo Sun, Jionghua (Judy) Jin, Zhenyu (James) Kong, Xiaowei Yue*
\IEEEcompsocitemizethanks{\IEEEcompsocthanksitem Yinan Wang, James Kong, and Xiaowei Yue are with the Grado Department of Industrial and Systems Engineering, Viriginia Tech, Blacksburg, VA, 24060.\protect\\
E-mail: \{yinanw, zkong, xwy\}@vt.edu
\IEEEcompsocthanksitem Wenbo Sun, and Jionghua (Judy) Jin are with the Department of Industrial and Operations Engineering, University of Michigan, Ann Arbor, MI, 48109.\protect\\
E-mail: \{sunwbgt, jhjin\}@umich.edu}

\thanks{* Corresponding Author: Xiaowei Yue, (xwy@vt.edu)}}

%
%

\markboth{Manuscript}%
{Shell \MakeLowercase{\textit{et al.}}: Bare Demo of IEEEtran.cls for Computer Society Journals}
%



\IEEEtitleabstractindextext{%
\begin{abstract}
\justifying
The training and test data for deep-neural-network-based classifiers are usually assumed to be sampled from the same distribution. When part of the test samples are drawn from a distribution that is sufficiently far away from that of the training samples (a.k.a. out-of-distribution (OOD) samples), the trained neural network has a tendency to make high confidence predictions for these OOD samples. Detection of the OOD samples is critical when training a neural network used for image classification, object detection, etc. It can enhance the classifier's robustness to irrelevant inputs, and improve the system resilience and security under different forms of attacks. Detection of OOD samples has three main challenges: (i) the proposed OOD detection method should be compatible with various architectures of classifiers (e.g., DenseNet, ResNet), without significantly increasing the model complexity and requirements on computational resources; (ii) the OOD samples may come from multiple distributions, whose class labels are commonly unavailable; (iii) a score function needs to be defined to effectively separate OOD samples from in-distribution (InD) samples. To overcome these challenges, we propose a Wasserstein-based out-of-distribution detection (WOOD) method. The basic idea is to define a Wasserstein-distance-based score that evaluates the dissimilarity between a test sample and the distribution of InD samples. An optimization problem is then formulated and solved based on the proposed score function. The statistical learning bound of the proposed method is investigated to guarantee that the loss value achieved by the empirical optimizer approximates the global optimum. The comparison study results demonstrate that the proposed WOOD consistently outperforms other existing OOD detection methods.

\end{abstract}

\begin{IEEEkeywords}
OOD detection, Wasserstein distance, machine learning, image classification, cyber security
\end{IEEEkeywords}}

\maketitle

\IEEEdisplaynontitleabstractindextext

%
\IEEEpeerreviewmaketitle

\IEEEraisesectionheading{\section{Introduction}\label{sec:introduction}}

%
%
%
%
\IEEEPARstart{D}{eep} Neural Networks (DNNs) have achieved outstanding performances on many challenging tasks, such as image classification \cite{NIPS2012_c399862d, 7780459}, object detection \cite{8237584, 7485869}, and speech recognition \cite{6857341}. When training a DNN, it is often assumed that the training and test samples are drawn from the same distribution. However, in practice, there likely exist abnormal test samples that are drawn from other distributions. Comparing to the samples drawn from the distribution of training samples (a.k.a. in-distribution (InD) data), these abnormal samples are referred to as out-of-distribution (OOD) data, which may not belong to any of the classes that the model is trained on. In this situation, a DNN-based classifier tends to over-confidently predict OOD samples with the class labels of InD samples. In addition, OOD samples in the training dataset may have a significant impact on the learning performance, which typically results in a reduction of the classification accuracy and diagnosis capability. Therefore, it is an urgent need to strengthen DNNs with the ability to detect OOD samples.

OOD detection for a DNN-based classifier mainly includes two sub-tasks: (i) A score function needs to be properly defined to evaluate the difference between InD and OOD samples; (ii) The InD and OOD samples need to be further separated in the space defined by the score function. These two sub-tasks are coupled with each other. A well-defined score function can better capture the difference and naturally separate InD and OOD samples, and such difference can be further enlarged by various techniques, such as preprocessing the input, calibrating the hyperparameters used in the score function, and retraining the classifier. 

In this paper, we propose the Wasserstein-based OOD detection (WOOD) method, which is an unified framework to train a DNN-based classifier to conduct classification and OOD detection simultaneously. The ``Wasserstein'' here refers to the Wasserstein distance \cite{ruschendorf1985wasserstein}, which is designed to evaluate the difference between probability distributions. In the following section, we first categorize and review some recent studies in OOD detection, and then discuss the necessity and current research gaps in adapting the Wasserstein distance as a dissimilarity measure and a loss function for OOD detection.

\subsection{Related Work}

\subsubsection{OOD Detection Without Tuning the Pre-trained Classifier}
\label{sec:cate1}
The recently developed methods of OOD detection in classification can be summarized into two categories according to whether the classifier is re-trained on the OOD samples. In the first category, OOD detection is regarded as an augmented property for a pre-trained classifier. For example, Hendrycks and Gimpel built a baseline for OOD detection using the threshold-based detector and the maximum softmax score as the score function \cite{DBLP:conf/iclr/HendrycksG17}. Liang et al. further rescaled the softmax score with a temperature parameter and preprocessed the inputs to further enlarge the difference between the maximum softmax score of InD and OOD samples \cite{liang2020enhancing}. Rather than relying on the maximum softmax score, some researchers tried to define the score functions based on different distance measures. For example, the Mahalanobis distance was calculated and calibrated on the intermediate features of DNNs to serve as the confidence score \cite{lee2018simple, denouden2018improving}; A measure of confidence was proposed by analyzing the invariance of softmax score under various transformations of inputs \cite{bahat2018confidence}; the uncertainty of DNNs was evaluated by using the gradient information from all the layers to serve as the score function \cite{oberdiek2018classification}; the trust score for each input was defined as the ratio of the Hausdorff distances from the input to its closest and second closest labels, which is used to determine whether a classifier's prediction can be trusted or not \cite{10.5555/3327345.3327458}. By projecting the inputs into a new space, these newly defined score functions can distinguish the InD and OOD samples better than the methods relying on the softmax score. 

Although these methods are compatible with different neural network architectures and are easy to implement, the score-function-based OOD detection methods have two limitations: (i) The choice of score functions depends on the type of applications and OOD patterns. It is required to design a score function that has a consistently good performance under all OOD patterns. (ii) The OOD detection performance highly relies on the values of hyperparameters in the selected score function, which are often tuned based on auxiliary OOD samples. However, after the classifier has been pre-trained, simply tuning the hyperparameters in the score function may not guarantee a good separation of  InD and OOD samples, which may not achieve a desired OOD detection performance.

\subsubsection{OOD Detection by Re-training Classifiers}
\label{sec:cate2}
In the second category, researchers tried to propose a unified framework to simultaneously train an OOD detector during the training process of the neural-network-based classifier. For example, Hendrycks et al. \cite{hendrycks2019oe} proposed an outlier exposure method to leverage auxiliary datasets of OOD outliers for training classifiers . An adversarial training with informative outlier mining (ATOM) method was proposed to improve the robustness of OOD detection with auxiliary datasets \cite{chen2021atom}. Moreover, DeVries and Taylor \cite{devries2018learning} proposed to add a confidence branch on the top of a classifier and train this branch to generate binary outputs to represent whether one input sample is OOD or not. Mohseni et al. \cite{Mohseni_Pitale_Yadawa_Wang_2020} added an auxiliary head on the top of a classifier and trained it in the self-supervised way. 

These methods can identify some fundamental differences between InD and OOD samples when training the classifier. However, they have three limitations: (i) The OOD detectors in these methods are based on the maximum softmax score or a separate branch of binary classifiers. The distance measure cannot represent the difference between two discrete distributions accurately; (ii) Additional components added to a classifier will increase the model complexity and training time; (iii) The auxiliary OOD dataset is often unlabelled. It is also unrealistic to assume all the OOD samples are from the same class. In practice, human beings usually categorize one sample to be OOD when it has significant differences from InD samples and does not belong to any of the InD classes. By forcing different types of OOD samples to fall in the same class, this category of OOD detection methods brings additional difficulty in training the classification model, which may result in poor performances in OOD detection and InD classification.

\subsubsection{Reviews of distance metrics for probability distribution}

The softmax output from a classifier for a specific input is the discrete probability distribution over all the possible classes. Thus, the dissimilarity between output distributions directly reflects the difference between inputs. Wasserstein distance is often used to measure the dissimilarity between probability distributions. Compared with the Kullback-Leibler (KL) divergence and Jensen-Shannon (JS) divergence, Wasserstein distance provides a meaningful and smooth representation of the distance even when two distributions are located in lower-dimensional manifolds without overlaps \cite{weng2019gan}. The smooth measure of distance ensures a stable learning process when optimizing the Wasserstein distance with the gradient-based method. Wasserstein distance has demonstrated its strength in many applications, such as the distributionally robust stochastic optimization (DRSO) \cite{gao2016distributionally}, deep active learning \cite{Shui2020DeepAL}, reinforcement learning \cite{NEURIPS2019_f8363057}, image classification \cite{NIPS2015_a9eb8122}, and data augmentation \cite{Li2021}. 

In this paper, we use Wasserstein distance to measure the dissimilarity between the InD and OOD samples, which is further used to detect OOD samples. To the best of our knowledge, our proposed WOOD is the first work to design the OOD detection framework by leveraging the strength of Wasserstein distance in measuring the dissimilarity between distributions. Although there are prior works adapting Wasserstein distance into the loss function of a classification task \cite{NIPS2015_a9eb8122}, and designing the Generative and Adversarial Networks (GAN) \cite{pmlr-v70-arjovsky17a}, research gaps still exist in designing OOD detection framework with Wasserstein distance: (1) Unlike the classification task, the label information of OOD samples are commonly unavailable, which make the Wasserstein-based loss function in the classification task invalid to be used in OOD detection; (2) Instead of a single objective in previous works, the proposed WOOD framework is required to train the model to do classification and OOD detection simultaneously.

\subsection{Motivations and Contributions}
\begin{figure*}[ht]
    \includegraphics[width=0.8\linewidth]{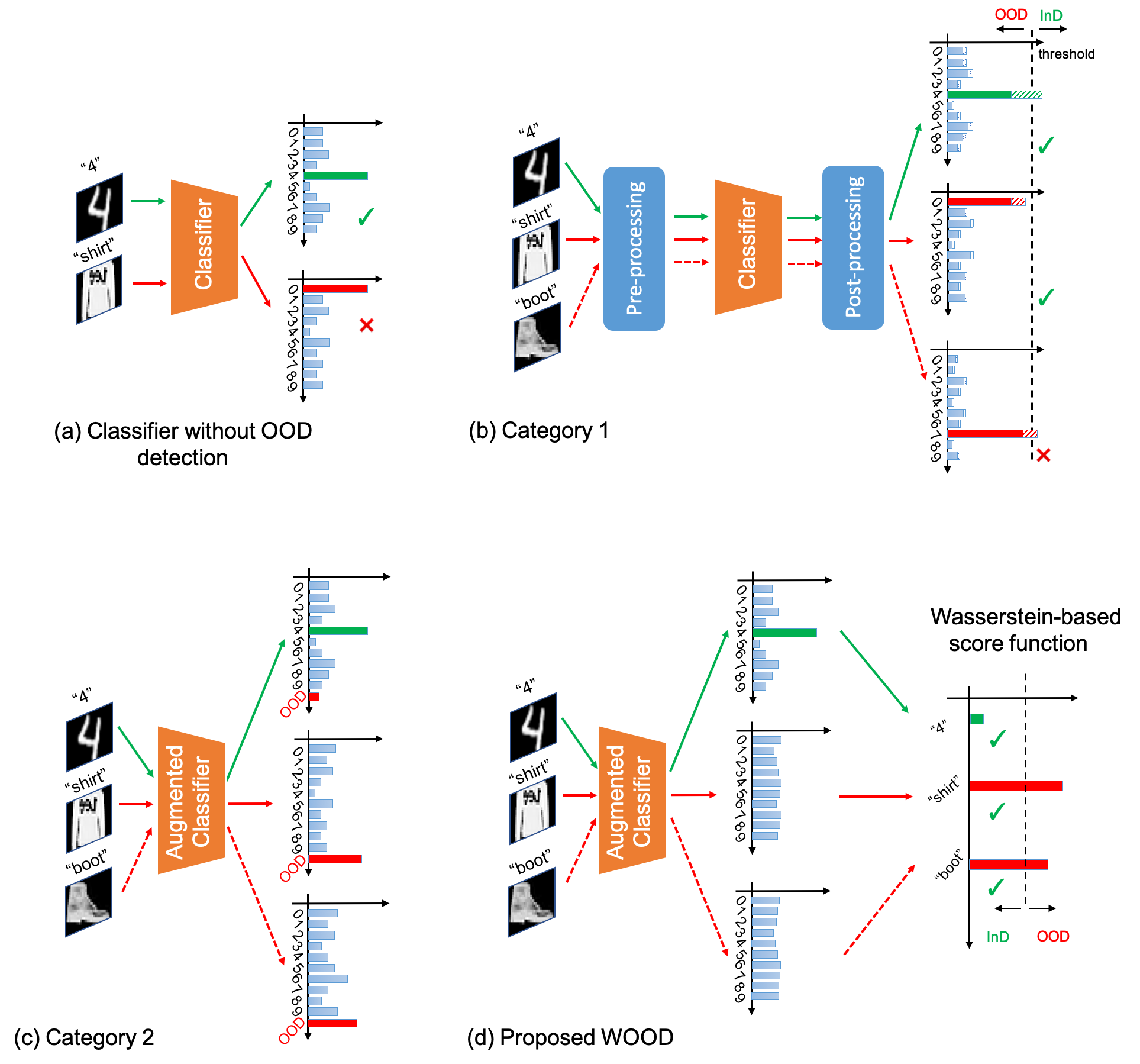}
    \centering 
    \caption{Comparison among the classifier without OOD detection, OOD detection without re-training the classifier, OOD detection by re-training the classifier, and the proposed WOOD. (a) The classifier without OOD detection tends to over-confidently assign incorrect labels to OOD samples; (b) One category of research works enable OOD detection only by pre-processing the input and post-processing the output; (c) Another category of research works enable OOD detection by training the classifier based on the assumption that all OOD samples are from the same class; (d) Our proposed WOOD specifically design Wasserstein-distance-based score function to evaluate the dissimilarities between InD and OOD samples and train the model by relaxing the assumption of OOD samples.}
    \label{fig:overall_flow}
\end{figure*}

To overcome the above limitations in using Wasserstein distance for OOD detection, we propose a new integrative approach, named WOOD, which integrates the above two categories of OOD detection methods (Sections \ref{sec:cate1} and \ref{sec:cate2}). Specifically, we design a new OOD score function whose hyperparameters can be simultaneously updated with the classifier. Fig. \ref{fig:overall_flow} shows a comparison of the classifier without OOD detection, the two categories of the existing OOD detection methods, and the proposed method. Going beyond the original neural-network-based classifier in Fig. \ref{fig:overall_flow} (a) \cite{NIPS2012_c399862d, 7780459}, Fig. \ref{fig:overall_flow} (b) demonstrates the flowchart of the first category of the existing methods, which maps inputs to a specific score function by preprocessing the inputs and postprocessing the outputs \cite{DBLP:conf/iclr/HendrycksG17} \xdash\cite{10.5555/3327345.3327458}. However, without simultaneously training the DNN for classification and OOD detection, it is often challenging to find a set of optimal hyperparameters to consistently identify various types of OOD samples. Fig. \ref{fig:overall_flow} (c) shows the idea of simultaneously training the classifier with InD and auxiliary OOD samples \cite{hendrycks2019oe, chen2021atom, devries2018learning, Mohseni_Pitale_Yadawa_Wang_2020}. The advantage of these methods is the efficiency of realizing sample classification and OOD detection simultaneously. However, they rely on the assumption that  all the OOD samples belong to the same class, which may not hold when different types of OOD samples are included in the test dataset.

Motivated by the aforementioned research gaps, as shown in Fig. \ref{fig:overall_flow} (d), we propose the Wassersterin-based score function and integrates it with the unified framework for OOD detection. The definition of our score function is intuitive. Suppose a classifier is trained to output the probabilities of a sample falling in each potential class. For an InD sample, we would like the corresponding prediction probabilities to be close to a one-hot vector, whose elements involve a close-to-one value for the true class, and close-to-zero values for all the other classes. For an OOD sample, we would like the prediction probabilities to be fuzzy among most classes, that is, a vector whose multiple elements taking similar values. 

In our proposed WOOD, Wasserstein distance \cite{ruschendorf1985wasserstein} is adopted to quantitatively measure such a distance between output discrete probability distributions when building the OOD score function, which is further incorporated into designing the multi-objective loss function. Wasserstein distance has two advantages: (i) it is defined in the metric space to evaluate the dissimilarity between distributions, which makes it a natural fit to the OOD detector; (ii) it can be optimized with the gradient-based method, so it is resilient to and compatible with various designs of loss function and diverse classifier structures. Comparing to the two categories of the existing OOD detection methods, the contributions of the proposed WOOD method are four folds:

\begin{enumerate}
    \item A Wasserstein-distance-based score function is proposed for the separation of InD and OOD samples. The definition of the score function is intuitive, and will be shown to outperform the benchmark score functions in several commonly-used datasets in the machine learning field.
    
    \item A unified framework is designed to simultaneously train the classifier and OOD detector. Such a one-step optimization setup will not introduce additional model complexity or computational load.
    
    \item The proposed OOD detection method does not require detailed labels of the OOD samples, nor force all the OOD samples to belong to the same class. With a relaxed assumption on the OOD samples, the proposed method can be generally applied to various applications.
    
    \item The theoretical properties of WOOD are investigated, which guarantees that the loss value achieved by the empirical minimizer approaches the global optimum. The gradient of the proposed loss function in WOOD is explicitly derived to further save computation time. 
\end{enumerate}

The remainder of this paper is organized as follows. Section \ref{framework} proposes the formulation of the WOOD method and discusses the properties and computational complexity of WOOD with two different cost matrices. Section \ref{learning} develops the learning algorithm of WOOD and derives the statistical learning bound of the proposed loss function. Section \ref{experiment} demonstrates the performance of WOOD using multiple datasets. Finally, Section \ref{conclusion} summarizes the contributions and concludes this paper.

\section{Proposed Wasserstein-based Out-of-Distribution Detection}
\label{framework}
We show the technical details of the proposed Wasserstein-based out-of-distribution detection (WOOD) method in this section. First, we briefly introduce the problem setup. Then, we formally define the OOD score function, incorporate it into the loss function, and formulate the corresponding detector for OOD samples. Lastly, we formulate the training of the OOD detector as an optimization problem and describe the procedure to obtain the numerical solution.

\subsection{Problem Setup}
In a classification problem, let $\mathcal{X}$ denote the space of inputs. Let $\mathcal{X}_\text{InD}$ and $\mathcal{X}_\text{OOD}$ denote two distinct sub-spaces on $\mathcal{X}$. Specifically, $\mathcal{X}_\text{InD}$ represents the space of InD samples with labels from $\mathcal{K}_\text{InD}$, and $\mathcal{X}_\text{OOD}$ represents the space of OOD samples whose labels lie beyond $\mathcal{K}_\text{InD}$. Here $\mathcal{X}=\mathcal{X}_\text{InD}\bigcup\mathcal{X}_\text{OOD}$. We refer to the input data from $\mathcal{X}_\text{InD}$ and $\mathcal{X}_\text{OOD}$ as In-distribution (InD) samples and Out-Of-Distribution (OOD) samples, respectively. Our objective is to train a classifier to correctly label the InD inputs from $\mathcal{X}_\text{InD}$, and a detector to identify samples from $\mathcal{X}_\text{OOD}$.

Our training dataset is organized as follows. Let $\mathcal{D}_\text{InD}$ denote the training dataset of InD samples. Each sample in $\mathcal{D}_\text{InD}$ consists of an input from $\mathcal{X}_\text{InD}$, and a corresponding class label from $\mathcal{K}_\text{InD}$. Since it is difficult to train an accurate OOD detector based on InD samples only \cite{chen2021atom}, we consider the case where the training dataset of auxiliary OOD samples, $\mathcal{D}_\text{OOD}$, is available. In real practice, $\mathcal{D}_\text{OOD}$ can be generated from adversarial training techniques \cite{10.5555/2969033.2969125} when real OOD samples are difficult to collect. Each element in the OOD dataset is an unlabelled input from $\mathcal{X}_\text{OOD}$ because OOD samples may come from various classes without exact labels.

Given the datasets described as above, our objective is two-folds. First, for an InD sample $x_\text{InD}\in\mathcal{X}_\text{InD}$ with label $k\in\mathcal{K}_\text{InD}$, we would like the predicted probability by the classifier to satisfy the following condition:
\begin{equation}
    f_{\theta}(x_\text{InD})\approx y^k,
\end{equation}
where $y^k$ is an one-hot vector whose $k^{th}$ element is $1$ and other elements are $0$'s; $f_{\theta}$ is a classifier that is parametrized by the model parameter $\theta$ and outputs the probabilities of an input $x_\text{InD}$ falling into all the class in $\mathcal{K}_{\text{InD}}$. Second, for an arbitrary sample $x\in\mathcal{X}$, we would like to train an OOD detector $g$ such that
\begin{equation}
    g(x)=
    \begin{cases}
    1,              & \text{if } x\in \mathcal{X}_\text{OOD}\\
    0,              & \text{if } x\in \mathcal{X}_\text{InD}.
    \end{cases}
\end{equation}

The true negative rate (TNR) and false negative rate (FNR) defined on the OOD detector $g$ are
\begin{align}
    \text{TNR}(g) = \mathbb{E}_{x \in \mathcal{X}_{\text{InD}}} \mathbb{I}[g(x)=0],
    \notag\\
    \text{FNR}(g) = \mathbb{E}_{x \in \mathcal{X}_{\text{OOD}}} \mathbb{I}[g(x)=0],
\end{align}
where $\mathbb{I}[.]$ is a sign function, when $g(x)=0$, its value is $1$, otherwise, its value is $0$. TNR denotes the percentage of InD samples that are correctly identified, and FNR is the percentage of OOD samples that are misidentified as the InD samples.

\subsection{Wasserstein-based score function}
We start from the general formulation of the Wasserstein distance for discrete distributions. Suppose $r_1$ and $r_2$ are discrete distributions of random variables that take values in $K$ classes with labels $\{1,...,K\}$, the Wasserstein distance between two $r_1$ and $r_2$ \cite{NIPS2015_a9eb8122} is

\begin{align}
     W(r_1,r_2) &= \text{inf}_{P\in \Pi(r_1,r_2)} \langle P,M \rangle,
    \label{Eq2-4}
\end{align}
where the $M \in \mathbb{R}^{K \times K}_{+}$ is the cost matrix; $\langle\cdot,\cdot\rangle$ represents Kronecker product between two matrices; $P$ is the joint probability distribution with an element of
\begin{align}
    \Pi(r_1,r_2):=\left\{P \in \mathbb{R}_{+}^{K \times K} \mid P \mathbf{1}_{K}=r_2, P^{\top} \mathbf{1}_{K}=r_1\right\},
    \label{Eq3-2}
\end{align}
where $\mathbf{1}_{K} \in \mathbb{R}^{K}$ denotes the $K$-dimensional vector whose elements are all ones; $r_{1}$ and $r_{2}$ are the marginal distributions of $P$. The properties of Wasserstein distance is included in Appendix \ref{Appendix-D}.

As was mentioned in Section~\ref{sec:introduction}, an OOD score function should indicate how likely an incoming sample belongs to $\mathcal{X}_\text{OOD}$. The Wasserstein distance is adapted to design the OOD score function by leveraging its advantage in evaluating the dissimilarity between distributions. Let $K$ denote the number of different classes in $\mathcal{K}_\text{InD}$. For $x\in\mathcal{X}$, the Wasserstein distance between the predictions given by $f_\theta$ and the one-hot vector $y^k$ of class $k$ is computed as:
\begin{align}
     W(f_{\theta}(x),y^k) &= \inf_{P} \langle P,M \rangle,
     \notag\\
    \text{s.t.}\;\;
    P\mathbf{1}_{K} &= y^k,
    \notag\\
    P^{\top}\mathbf{1}_{K} &= f_{\theta}(x),
    \notag\\
    k &\in \mathcal{K}_\text{InD},
    \label{Eq3-1}
\end{align}
where $P$ is a joint discrete probability distribution whose marginal distributions are $f_\theta(x)$ and $y^k$. It is worth noting that the feasible region of $P$ might change when calculating Wasserstein distances between different pairs of marginal distributions. $M$ is the distance matrix indicating the cost to transport one unit mass of probability between two discrete distributions. Its structure will be specified in Section~\ref{WOOD_Properties}. For an OOD sample $x_{\text{OOD}}\in\mathcal{X}_\text{OOD}$, we would like the distance $W(f_{\theta}(x_{\text{OOD}}),y^k)$ to be large for each class $k\in\mathcal{K}_\text{InD}$ , which is equivalent to enlarge the distance to its closest class. Thus, for a random sample $x$, the proposed Wasserstein-distance-based score function is defined as: 
\begin{align}
    s(x) &= \min_{k}W(f_{\theta}(x),y^{k})
    \notag\\
    &= \min_{k}\inf_{P}\langle P, M\rangle,
    \notag\\
    \text{s.t.}\;\;
    P\mathbf{1}_{K} &= y^{k},
    \notag\\
    P^{\top}\mathbf{1}_{K} &= f_{\theta}(x),
    \notag\\
    k &\in \mathcal{K}_\text{InD}.
    \label{Eq3-6}
\end{align}

Note that the above Wasserstein-distance-based score function does not require to assume that the OOD samples belong to the same class. Instead, it reveals the difference between InD and OOD samples by evaluating the closest distance from an arbitrary sample to any InD classes. Ideally, if $x$ is an InD sample, $s(x)$ should be close to 0. If $x$ is an OOD sample, $s(x)$ should be close to a specific positive value which is determined by the choice of the distance matrix $M$. To this end, we impose a threshold parameter $\epsilon$ on the defined Wasserstein-distance-based score function and construct the OOD detector $g$ as:
\begin{align}
    g(x) = \left\{
                          \begin{aligned}
                          1, \;\;  & s(x) > \epsilon\\
                          0, \;\;  & s(x) \leq \epsilon,
                          \end{aligned}
                          \right.
    \label{Eq3-7}
\end{align}
The training of the threshold parameter $\epsilon$ will be discussed in Section \ref{hyperparameters}.

\subsection{WOOD Loss Function}
\label{sec:WOOD Loss}
With the specifically designed score function, the instant question is how to train the classifier to separate InD and OOD samples in the space defined by the score function as well as preserving the classification performance. As a key component for training the classifier, a new loss function is designed for OOD detection by mimicking the human intuitive logic, that is, to regard a sample as OOD if it does not belong to any existing classes based on human's experience/memory. The Wasserstein-distance-based score function is incorporated in the loss, and the classifier is trained to assign the InD samples with their correct labels and keep all the OOD samples away from any InD classes. Motivated by this logic, we define the WOOD loss function as
\begin{align}
    \mathcal{L}(\mathcal{D}_\text{InD}, \mathcal{D}_\text{OOD}) &= \frac{1}{N_\text{InD}}\sum_{\left({x}_\text{InD}, k\right) \in \mathcal{D}_\text{InD}}-\log \left(f_{{\theta}}({x}_\text{InD})^{\top}y^{k}\right) 
    \notag\\
    &-\beta \frac{1}{N_\text{OOD}}\sum_{{x}_\text{OOD}\in \mathcal{D}_\text{OOD}}\min_{k}\inf_{P}\langle P, M\rangle,
    \notag\\
    \text{s.t.}\;\;
    P\mathbf{1}_{K} &= y^{k},
    \notag\\
    P^{\top}\mathbf{1}_{K} &= f_{{\theta}}({x}_\text{OOD}
    ),
    \notag\\
    k &\in \mathcal{K}_\text{InD},
    \label{Eq3-5}
\end{align}
where $N_{\text{InD}}$, $N_{\text{OOD}}$ are the sample sizes of $\mathcal{D}_\text{InD}, \mathcal{D}_\text{OOD}$, respectively; $\beta$ is the hyperparameter to balance the loss in the classification error and OOD detection error. The WOOD loss includes two terms. The first term on the right hand side of Equation~(\ref{Eq3-5}) is the cross-entropy loss that aims to correctly match the InD samples with the corresponding labels. The second term on the right hand side of Equation~(\ref{Eq3-5}) is the Wasserstein-distance-based score, which calculates the minimum distances between the predicted softmax score vector and the one-hot vector of any InD classes. By minimizing the WOOD loss, the classifier will tend to assign correct labels to InD samples and keep the predicted results of OOD samples away from any of the InD classes.

\subsection{Properties of WOOD with Different Distance Matrices}
\label{WOOD_Properties}
The properties of WOOD are influenced by the choices of distance matrices $M$ in Equation~(\ref{Eq3-5}). Here we provide two commonly used distance matrices - the binary distance matrix and dynamic distance matrix, and discuss their influence on the properties of WOOD.

\subsubsection{Binary Distance Matrix}
The formulation of the binary distance matrix is given in Equation (\ref{Eq3-8}). Recall that the distance matrix in the Wasserstein distance is to evaluate the unit cost in transporting the probability mass. In the binary distance matrix $M_\text{Bi}$, the diagonal entries are set as $0$, while the other entries are set as 1. Given that the binary distance matrix treats all the classes equally, transporting the unit probability mass to any different class will consistently cause a cost of 1.

\begin{align}
    M_\text{Bi} = \begin{bmatrix} 
    0 & 1 & \dots \\
    \vdots & \ddots & \\
    1 & & 0 
    \end{bmatrix}
\label{Eq3-8}
\end{align}

The binary distance matrix is widely used in applications of Wasserstein distance, such as classification \cite{NIPS2015_a9eb8122}. Our work is the first to explore its performance in OOD detection. The advantages of binary distance matrix include: (i) the formulation is straightforward and easy to understand; (ii) it is compatible with various classifier structures and diverse applications; (iii) it is a metric matrix so that the Wasserstein distance satisfies the axioms of a distance (shown in Equation (\ref{Eq2-2})).

\subsubsection{Dynamic Distance Matrix}
\label{sec:3-4-2}
Different from the binary distance matrix, we tailor a dynamic distance matrix for the OOD detection purpose. This distance matrix is calculated by the pair-wise absolute difference between elements in a predicted softmax score $f_{\theta}(x)$ and an one-hot vector label $y^{k}$, which is given as

\begin{align}
    M_{\text{Dy}}&(f_{\theta}(x), y^{k}) 
    \notag\\
    &= \left[ f_{\theta}(x) - \mathbf{0}_{K},...,\underbrace{\mathbf{1}_{K} - f_{\theta}(x)}_{k^{th}},...,f_{\theta}(x) - \mathbf{0}_{K}\right],
\label{Eq3-9}
\end{align}
where $\mathbf{0}_{K} \in \mathbb{R}^{K}$ denotes the $K$-dimensional vector whose elements are all zeros. In Equation (\ref{Eq3-9}), the dynamic distance matrix is denoted as a function of $f_{\theta}(x)$ and $y^{k}$, which will be dynamically changed with different marginal distributions. In the following parts, we use $M_{\text{Dy}}$ for simplicity. The dynamic distance matrix is not a metric matrix, and hence the Wasserstein distance does not satisfy the axioms of distance. However, the properties of the Wasserstein distance with a dynamic distance matrix make it a good fit for the OOD detection problem. First, the dynamic distance matrix $M_{\text{Dy}}$ is invariant with respect to any one-hot vector $y^{k}$, setting an equal penalty on the loss function when an OOD sample is close to any InD classes. The proposition is formally expressed as follows.
\begin{proposition}
    \begin{equation}
        W_{M_{\text{Dy}}}(f_{\theta}(x),y^{1}) = ,...,=W_{M_{\text{Dy}}}(f_{\theta}(x),y^{K})
    \end{equation}
  \label{Propert-1}
\end{proposition}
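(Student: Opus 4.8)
The plan is to exploit the degeneracy created by pairing $f_\theta(x)$ against a one-hot vector: because $y^k$ is an extreme point of the probability simplex, the transportation polytope $\Pi(f_\theta(x), y^k)$ collapses to a single point, so the infimum defining $W_{M_{\text{Dy}}}(f_\theta(x), y^k)$ is attained by a unique, explicitly computable plan and no genuine optimization is required.

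First I would show that $\Pi(f_\theta(x), y^k)$ is a singleton. The marginal constraint attached to $y^k$ forces all probability mass to lie in the single slice of $P$ indexed by $k$; since $P \geq 0$ and the remaining entries of $y^k$ are zero, every entry of $P$ outside that slice must vanish. The complementary marginal constraint then fixes the surviving slice to be exactly $f_\theta(x)$. I will denote this unique feasible plan by $P^{(k)}$, so that $W_{M_{\text{Dy}}}(f_\theta(x), y^k) = \langle P^{(k)}, M_{\text{Dy}} \rangle$ with the infimum dropped.

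The key observation in the next step is that the dynamic distance matrix is constructed so that its $k$-th slice always equals $\mathbf{1}_K - f_\theta(x)$, independently of the value of $k$ (this is precisely the $\mathbf{1}_K - f_\theta(x)$ block inserted at the $k$-th position in Equation~(\ref{Eq3-9})). Because $P^{(k)}$ routes the mass $f_\theta(x)$ through exactly that $k$-th slice, substituting both explicit forms into the inner product gives
\begin{equation}
W_{M_{\text{Dy}}}(f_\theta(x), y^k) = \sum_{i=1}^{K} f_\theta(x)_i \bigl(1 - f_\theta(x)_i\bigr) = 1 - \|f_\theta(x)\|_2^2 .
\end{equation}
The right-hand side carries no dependence on $k$, and since the same argument applies verbatim to every class index, the chain of equalities in the proposition follows at once.

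The conceptual crux, and the step I would be most careful about, is the first one: verifying that a one-hot marginal leaves no freedom whatsoever in the coupling, so that the dynamic cost matrix enters only through the single $k$-indexed slice around which it was designed. Once that is in place, the remainder is a one-line evaluation, and the $k$-independence is immediate from the fact that this active slice holds the same vector $\mathbf{1}_K - f_\theta(x)$ for every $k$. Equivalently, one could invoke symmetry, since relabeling the classes permutes $y^k$ and the corresponding slices of $M_{\text{Dy}}$ in lockstep and therefore leaves the optimal transport cost invariant; but the direct computation is cleaner and additionally pins down the common value as $1 - \|f_\theta(x)\|_2^2$.
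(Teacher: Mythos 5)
Your proof is correct and follows essentially the same route as the paper's: identify the unique feasible coupling $P^{*}=\bigl[\mathbf{0}_{K},\dots,f_{\theta}(x),\dots,\mathbf{0}_{K}\bigr]$ forced by the one-hot marginal, pair it with the $k$-th slice $\mathbf{1}_{K}-f_{\theta}(x)$ of $M_{\text{Dy}}$, and evaluate the product to get the $k$-independent value $1-\sum_{i}\left(f_{\theta}(x)[i]\right)^{2}$. The only difference is that you justify the singleton structure of $\Pi(f_{\theta}(x),y^{k})$ more explicitly than the paper, which simply asserts it.
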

This proposition reveals the intuition of designing the dynamic distance matrix. Unlike the classification task, the OOD detection mainly cares about whether an incoming sample is OOD or not, which means the proposed method can regard all the InD classes as the same. Using the Wasserstein distance with a dynamic distance matrix, the minimal operator in the score function and loss function in Equations~(\ref{Eq3-6}) and (\ref{Eq3-5}) can be eliminated, thus the formulation is  simplified as

\begin{align}
    s(x) = &\inf_{P}\langle P, M\rangle,
    \label{Eq3-10}
\end{align}
\begin{align}
    \mathcal{L}(\mathcal{D}_\text{InD}, \mathcal{D}_\text{OOD}) &= \frac{1}{N_\text{InD}}\sum_{\left(x_{\text{InD}}, k\right) \in \mathcal{D}_\text{InD}}-\log \left(f_{\theta}(x_{\text{InD}})^{\top}y^{k}\right)
    \notag\\
    &-\beta \frac{1}{N_\text{OOD}}\sum_{x_{\text{OOD}}\in \mathcal{D}_\text{OOD}}\inf_{P}\langle P, M\rangle,
    \label{Eq3-11}
\end{align}
where $k \in \mathcal{K}_\text{InD}$ is randomly selected because different class labels have no influence in the Wasserstein distance using the dynamic distance matrix.

The second proposition described below shows the maximal point of the Wasserstein-distance-based score function.
\begin{proposition}
    Consider $W_{M_{\text{Dy}}}(f_{\theta}(x),y^{k})$ as a function of $f_{\theta}(x)$, it reaches the maximum when $f_{\theta}(x) = (\frac{1}{K}, ..., \frac{1}{K})$.
  \label{Propert-2}
\end{proposition}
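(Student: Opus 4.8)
The plan is to exploit the fact that one of the two marginals, $y^{k}$, is a one-hot (hence extremal) vector, which pins the coupling down uniquely and collapses the infimum in the definition of $W_{M_{\text{Dy}}}$ to a single evaluation. Writing $p = f_{\theta}(x) = (p_{1},\dots,p_{K})$, I would first observe that the constraint $P\mathbf{1}_{K} = y^{k}$ forces every row of $P$ other than the $k$-th to vanish, since those row-sums are $0$ and $P \ge 0$; the constraint $P^{\top}\mathbf{1}_{K} = p$ then forces the single surviving row to equal $p$, i.e. $P_{kj} = p_{j}$. Thus $\Pi(f_{\theta}(x),y^{k})$ is a singleton and $W_{M_{\text{Dy}}}(f_{\theta}(x),y^{k}) = \langle P, M_{\text{Dy}}\rangle$ for this unique $P$. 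Pairing $p$ against the distinguished $k$-th slot of $M_{\text{Dy}}$, which carries the entries $\mathbf{1}_{K} - f_{\theta}(x)$, I expect the closed form
\begin{equation}
W_{M_{\text{Dy}}}(f_{\theta}(x),y^{k}) = \sum_{j=1}^{K} p_{j}(1-p_{j}) = 1 - \sum_{j=1}^{K} p_{j}^{2} = 1 - \|f_{\theta}(x)\|_{2}^{2}.
\end{equation}
This expression is manifestly independent of $k$, which is exactly the content of Proposition~\ref{Propert-1} and confirms the reduction is consistent; indeed, by invoking Proposition~\ref{Propert-1} I may evaluate the score at any convenient $k$ and still land on this quadratic.

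With the score written as $W(p) = 1 - \|p\|_{2}^{2}$, maximizing it over admissible outputs $f_{\theta}(x)$ becomes the problem of minimizing $\sum_{j} p_{j}^{2}$ subject to the simplex constraints $p_{j}\ge 0$ and $\sum_{j} p_{j} = 1$. I would close this either by Cauchy--Schwarz, $\sum_{j} p_{j}^{2} \ge (\sum_{j} p_{j})^{2}/K = 1/K$ with equality precisely when all $p_{j}$ coincide, or equivalently by a Lagrange-multiplier argument on the equality constraint $\sum_{j} p_{j} = 1$, which yields the single stationary point $p_{j} = 1/K$. Since $p_{j} = 1/K$ satisfies $p_{j}\ge 0$, it lies in the interior of the simplex and no boundary analysis is required.

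Finally I would certify that this stationary point is the global maximizer rather than a saddle or minimum: the map $p \mapsto 1 - \|p\|_{2}^{2}$ is strictly concave (its Hessian is $-2I$, negative definite), so on the compact convex simplex it has a unique maximizer, necessarily the interior critical point $f_{\theta}(x) = (\tfrac{1}{K},\dots,\tfrac{1}{K})$, attaining the value $1 - \tfrac{1}{K}$. The only genuine obstacle is the first step: correctly reading off the structure of the dynamic cost matrix $M_{\text{Dy}}$ and the orientation of the marginal constraints so that the unique coupling is evaluated against the right entries and produces the clean quadratic $1 - \|f_{\theta}(x)\|_{2}^{2}$. Once that reduction is secured, the remainder is a textbook application of Cauchy--Schwarz (or Lagrange multipliers) together with strict concavity.
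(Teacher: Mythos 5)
Your proposal is correct and follows essentially the same route as the paper's Appendix~\ref{Appendix-A}: the one-hot marginal collapses $\Pi(f_\theta(x),y^k)$ to a single coupling, giving the closed form $W_{M_{\text{Dy}}}(f_\theta(x),y^k)=1-\sum_{i=1}^K(f_\theta(x)[i])^2$, and the maximum is then located by minimizing $\sum_i p_i^2$ on the simplex via Cauchy--Schwarz. Your added strict-concavity certification of global optimality is a harmless refinement beyond what the paper records.
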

This proposition implies that the score function reaches maximum when the classifier cannot determine which class the sample falls into. The combination of the above two propositions capture the geometric shape of $W_{M_{\text{Dy}}}(f_{\theta}(x),y^{k})$ as a function of the softmax vector $f_{\theta}(x)$: i.e. a function defined on a hyperplane, taking the maximal value at the center $(\frac{1}{K}, ..., \frac{1}{K})$, and taking the minimal value on the boundary $y^{k},k\in\mathcal{K_{\text{InD}}}$. This geometric feature lays a foundation for incorporating $W_{M_{\text{Dy}}}(f_{\theta}(x),y^{k})$ into the loss function and using it as the score function. The loss function will train the classifier to map InD and OOD samples in opposite directions. For the InD samples, the model is trained to output softmax scores close to their labels, while for the OOD samples, the model is trained to output softmax scores that are away from any labels and close to $(\frac{1}{K}, ..., \frac{1}{K})$. The proof of Propositions \ref{Propert-1} and \ref{Propert-2} can be found in Appendix \ref{Appendix-A}.

\subsubsection{Influence of Different Distance Matrices on Computational Complexity}
\label{Complexity}

The Equations (\ref{Eq3-10}) and (\ref{Eq3-11}) indicate that compared with binary distance matrix, using dynamic distance matrix reduces the computational cost in calculating the WOOD loss and the score function, because the minimal operator is eliminated. The score function $s(x)$ is used as an example to theoretically illustrate the influence on computational complexity. Suppose there is a random input $x$, if the dynamic distance matrix is used, the computational complexity of calculating the regularized Wasserstein distance between $f_{\theta}(x)$ and $y^{k}$ is $O(K^{2})$ \cite{NIPS2013_af21d0c9} (introduced in Appendix \ref{Appendix-D}). Thus, the computational complexity of $s(x)$ with dynamic distance matrix (Equation (\ref{Eq3-10})) is $O(K^{2})$. In contrast, if the binary distance matrix is used, as shown in Equation (\ref{Eq3-6}), the score function needs to find the minimum Wasserstein distance between the output softmax score and all $K$ possible InD classes, which increases the computational complexity to $O(K^{3})$. A similar influence also exists when calculating WOOD loss with different formulations of the distance matrix. From the perspective of computational complexity, the dynamic distance matrix is preferably selected.

\section{Computational and Theoretical Perspectives}
\label{learning}
Given the formulation of WOOD, the following task is to learn the model parameters with WOOD loss. In this section, we will at first show the procedure of model training, and then discuss the statistical property of the proposed method. 

\begin{figure*}[!ht]
    \includegraphics[width=0.9\linewidth]{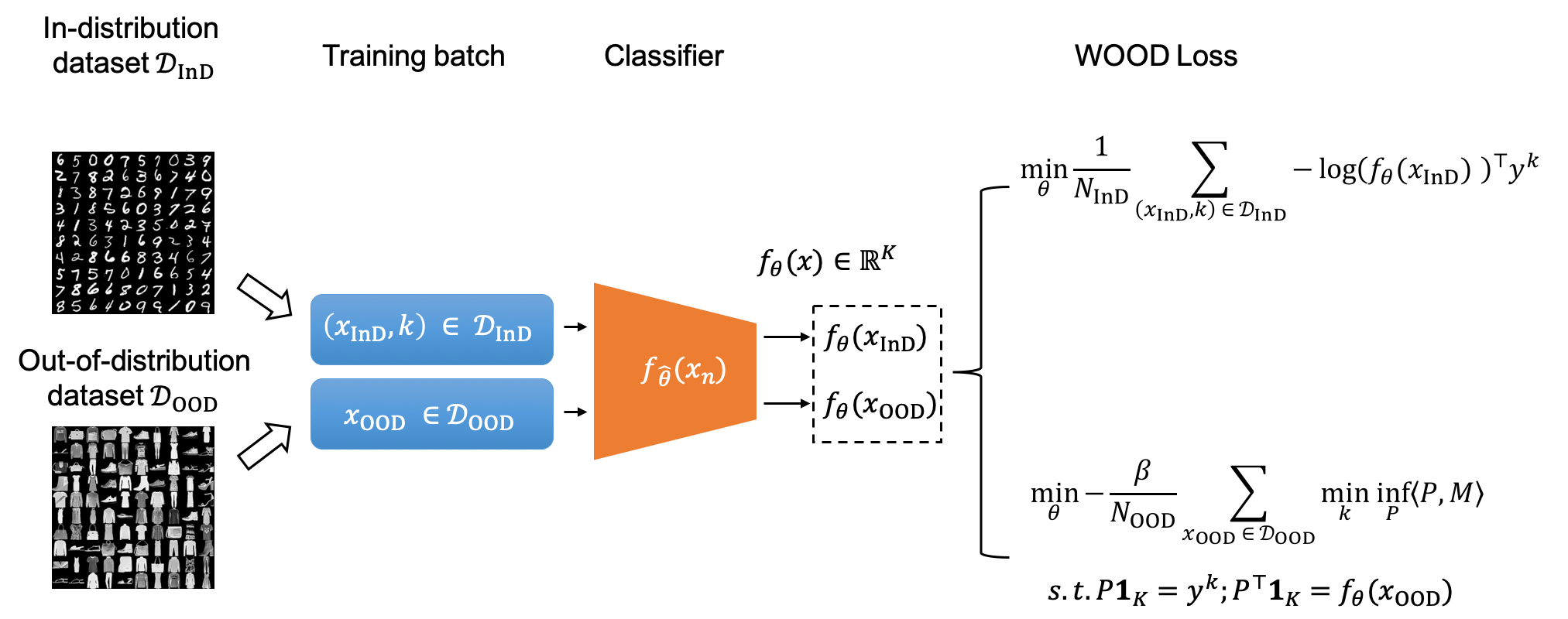}
    \centering 
    \caption{Overview of adapting WOOD loss to train image classifier}
    \label{fig:framework_overview}
\end{figure*}

\subsection{Gradient-based Model Training}
\label{gradient}
The gradient-based method is usually applied in deep learning to learn the model parameters. As shown in Equation (\ref{Eq3-5}), the proposed WOOD loss consists of two components, the cross-entropy used for classifying InD samples and the Wasserstein-based score function for detecting OOD samples. The gradient of cross-entropy has been studied in the classification models \cite{https://doi.org/10.1111/j.2517-6161.1958.tb00292.x}. Here we focus on deriving the gradient of the Wasserstein-based score function. To do this, it is suffice to find a way to compute
\begin{equation}
    \nabla_\theta\,\min_{k}\,\inf_{P}\,\langle P, M\rangle.\nonumber
\end{equation}

Under the following proposition (see proof in Appendix \ref{Appendix-C}), the partial derivative and minimal operators can be interchanged. 
\begin{proposition}
    Suppose $\eta_i(\cdot)$'s are mutually different at $\theta$, and $\eta_i(\theta)$ is first-order differentiable, with $i^{*}=arg\,\min_i \eta_i(\theta)$, we have $\nabla_\theta \min_i \eta_i(\theta) = \nabla_\theta \eta_{i^*}(\theta)$.
    \label{prop:4-1}
\end{proposition}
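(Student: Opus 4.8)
The plan is to exploit that the minimum is attained \emph{uniquely} at $i^*$, so that in a neighborhood of $\theta$ the function $\min_i \eta_i$ agrees with the single differentiable branch $\eta_{i^*}$; the gradient identity then follows from the elementary fact that two functions coinciding on an open neighborhood of a point share the same gradient there.

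First I would extract a strict separation from the hypothesis. Since $i^* = \arg\min_i \eta_i(\theta)$ and the values $\eta_i(\theta)$ are mutually distinct, we have $\eta_{i^*}(\theta) < \eta_j(\theta)$ for every $j \neq i^*$, so the gap $\delta := \min_{j \neq i^*} (\eta_j(\theta) - \eta_{i^*}(\theta))$ is strictly positive. Here I rely on the index set being finite, so that this minimum over $j$ is attained and hence positive rather than merely an infimum.

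Next I would propagate this strict inequality to a neighborhood using continuity. Each $\eta_i$ is first-order differentiable and hence continuous, so for each $j$ there is a radius on which $\eta_j$ and $\eta_{i^*}$ each deviate from their values at $\theta$ by less than $\delta/2$. Taking the smallest such radius over the finite collection of indices yields an open ball $U$ around $\theta$ on which $\eta_{i^*}(\theta') < \eta_j(\theta')$ for all $j \neq i^*$ and all $\theta' \in U$. Consequently the minimizing index is constant on $U$, and $\min_i \eta_i \equiv \eta_{i^*}$ as functions restricted to $U$.

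Finally, because $\min_i \eta_i$ coincides with the differentiable function $\eta_{i^*}$ on the open set $U$ containing $\theta$, it is itself differentiable at $\theta$ with the same gradient, giving $\nabla_\theta \min_i \eta_i(\theta) = \nabla_\theta \eta_{i^*}(\theta)$. I expect the point most deserving of care to be this neighborhood argument: the conclusion genuinely depends on the \emph{strict} separation guaranteed by the ``mutually different'' hypothesis, since at a tie the pointwise minimum of two smooth branches can acquire a non-differentiable kink at $\theta$. I would therefore be explicit about where finiteness of the index set and strictness of the gap $\delta$ enter, as those are exactly the ingredients that fail in the degenerate case.
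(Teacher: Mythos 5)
Your proof is correct and follows essentially the same route as the paper's: use the strict gap from the ``mutually different'' hypothesis together with continuity to show the minimizing index is constant on a neighborhood of $\theta$, so that $\min_i\eta_i$ coincides with the single branch $\eta_{i^*}$ there, and the gradient identity follows. If anything, your version is slightly cleaner, since the paper reaches the same local identification but then passes through an unnecessary difference-quotient limit with $\epsilon = M/n$, $n\to\infty$, whereas you simply invoke that two functions agreeing on an open neighborhood have the same gradient.
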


That is to say:
\begin{equation}
    \nabla_\theta\,\min_{k}\,\inf_{P}\,\langle P, M\rangle=\min_{k}\,\nabla_\theta\,\inf_{P}\,\langle P, M\rangle.
\end{equation}

Note that $P$ and $M$ depend on $k^*$ that minimizes the Wasserstein distance from the one-hot vector in class $k$ to $f_\theta(x)$. According to the chain rules,
\begin{equation}
    \nabla_\theta\inf_{P \in \Pi^{*}}\,\langle P, M\rangle=
    \nabla_{f_\theta(x)}\inf_{P \in \Pi^{*}}\,\langle P, M\rangle \times \nabla_\theta\,f_\theta(x),
    \label{eq:multiply}
\end{equation}
where $\Pi^{*}$ is the set of joint distributions with $f_{\theta}(x)$ and $y^{k^{*}}$ as marginal distributions. Note that $\nabla_\theta\, f_\theta(x)$ is the gradient of the softmax output of a deep neural network, which has been widely studied in the related literature. Therefore we only need to focus on the gradient term $\nabla_{f_\theta(x)}\inf_{P \in \Pi^{*}}\,\langle P, M\rangle$, which is equivalent to the gradient of  $W\left(f_\theta(x),y^{k^*}\right)$ in Equation~(\ref{Eq3-1}). 

We derive the gradient of Wasserstein distance with respect to $f_{\theta}(x)$ following similar procedures in \cite{NIPS2015_a9eb8122}. The basic idea is that because Wasserstein distance (shown in Equation (\ref{Eq3-1})) is a linear programming (LP), its gradient with respect to $f_{\theta}(x)$ can be computed via Lagrangian duality. The optimal values of primal and dual problems are equal. So the value of the corresponding dual variable at the optimal point is the desired gradient of the Wasserstein distance with respect to $f_{\theta}(x)$. Suppose we have dual variables $a,b$ corresponding to primal variables $y^{k}, f_{\theta}(x)$, respectively, and two intermediate variables $u,v$ are defined as $u = \exp^{-\lambda a - \frac{1}{2}}, v = \exp^{-\lambda b - \frac{1}{2}}$, the $u$ and $v$ can be solved iteratively by Sinkhorn-Knopp algorithm \cite{NIPS2013_af21d0c9}, which is given in Algorithm \ref{Alg:Sinkhorn} (in Appendix \ref{Appendix-E}). Given the value of $v$ when converging as $v^{*}$, the gradient of Wasserstein distance with respect to $f_{\theta}(x)$ is given as

\begin{align}
    \nabla_{f_{\theta}(x)}\,W(f_{\theta}(x), y^{k}) = b^{*} = -\frac{1}{\lambda}(\log v^{*} + \frac{1}{2}).
    \label{Eq4-7}
\end{align}

The detailed derivation of gradient is in Appendix \ref{Appendix-E}. To this point, we can compute the gradient of WOOD loss with respect to the softmax scores of InD samples $x_{\text{InD}}$ and OOD samples $x_{\text{OOD}}$ as 
\begin{align}
    \nabla_{f_{\theta}(x_{\text{InD}})} \mathcal{L} &= \frac{-1}{N_{\text{InD}}}\left(\frac{1}{f_{\theta}(x_{\text{InD}})}\right)^{\top}y^{k},
    \label{Eq4-8}
\end{align}

\begin{align}
    \nabla_{f_{\theta}(x_{\text{OOD}})}\mathcal{L} &= \frac{\beta}{\lambda N_{\text{OOD}}}(\log v^{*} + \frac{1}{2}),
    \label{Eq4-9}
\end{align}
where $v^{*}$ is calculated by Algorithm \ref{Alg:Sinkhorn} with $y^{k^{*}}$ and $f_{\theta}(x_{\text{OOD}})$ as the inputs. The gradient of WOOD loss with respect to the model parameters $\theta$ can be further derived via the chain rule.

\subsection{Training Algorithm}
The training process using WOOD loss is summarized in Algorithm \ref{Alg:Training}. $\mathbf{X}$ and $\mathbf{K}$ denote one batch of samples and their corresponding labels, respectively. In each batch, $\mathbf{X}_{\text{InD}}$ and $Y_{\text{InD}}$ denote the portion of InD images and their labels, respectively. $\mathbf{X}_{\text{OOD}}$ denotes the portion of OOD images in each batch. Labels of OOD images $Y_{\text{OOD}}$ are only used to identify the OOD samples during training. Each image is denoted as a tensor with the dimension of $C\times H_{1}\times H_{2}$, in which $C$ represents the number of channels, $H_{1}$ and $H_{2}$ represent the length and width, respectively. The number of InD and OOD samples in each batch is $B_{\text{InD}}$ and $B_{\text{OOD}}$, respectively. The algorithm mainly consists of four steps: (i) in each iteration, a batch of InD samples is mixed with fewer OOD samples, and the label $\mathbf{1}_{K}$ assigned to OOD samples is only used to identify them in the training phase and not used in calculation; (ii) in the forward propagation, the InD and OOD samples are fed into the classifier to generate the softmax scores, and the value of the WOOD loss is calculated by Equation (\ref{Eq3-5}); (iii) in the backward propagation, the gradients of the WOOD loss with respect to the softmax scores are calculated by Equations (\ref{Eq4-8}, \ref{Eq4-9}), and the gradients with respect to model parameters are generated via chain rule; (iv) the model parameters are then updated via the gradient-based method.

\begin{algorithm}[ht]
  \caption{Training algorithm of WOOD}
  \begin{algorithmic}[1]
    \Inputs{$\mathcal{D}_{\text{InD}}, \mathcal{D}_{\text{OOD}}$}
    \Initialize{$\beta$\Comment{hyperparameter in loss function}\\
    $K$\Comment{the number of class}\\
    \text{ClsModel}\Comment{classification model}}
    \State{\textbf{Training in one epoch:}}
    \For{$\mathbf{X}_{\text{InD}} \in \mathbb{R}^{B_{\text{InD}}\times C\times H_{1}\times H_{2}}, Y_{\text{InD}} \in \mathbb{R}^{B_{\text{InD}}\times K}$ in $\mathcal{D}_{\text{InD}}$}
      \State Randomly select $\mathbf{X}_{\text{OOD}} \in \mathbb{R}^{B_{\text{OOD}}\times C\times H_{1}\times H_{2}}$ in $\mathcal{D}_{\text{OOD}}$
      \State{\textbf{Forward Propagation:}}
      \State $Y_{\text{OOD}} \in \mathbb{R}^{B_{\text{OOD}}\times K} \gets \mathbf{1}_{K}$ \Comment{indicate OOD samples} 
      \State $\mathbf{X} \gets [\mathbf{X}_{\text{InD}}, \mathbf{X}_{\text{OOD}}]\in \mathbb{R}^{(B_{\text{InD}}+B_{\text{OOD}})\times C\times H_{1}\times H_{2}}$
      \State $Y \gets [Y_{\text{InD}}, Y_{\text{OOD}}] \in \mathbb{R}^{(B_{\text{InD}}+B_{\text{OOD}})\times K}$
      \State $f_{\theta}(\mathbf{X}) \gets \text{softmax}\left(\text{ClsModel}(\mathbf{X})\right)$ 
      \State{\textbf{The value of Loss:}}
      \State $(\mathbf{X}_{\text{InD}}, f_{\theta}(\mathbf{X}_{\text{InD}})) \gets (\mathbf{X}, f_{\theta}(\mathbf{X}))[\text{InD-idx}]$
      \State $(\mathbf{X}_{\text{OOD}}, f_{\theta}(\mathbf{X}_{\text{OOD}})) \gets (\mathbf{X},f_{\theta}(\mathbf{X}))[\text{OOD-idx}]$
      \State $\text{InD-Loss} \gets \frac{1}{B_{\text{InD}}}\sum -\log \left(f_{\theta}(x_{\text{InD}})^{\top}y^{k}\right) $
      \State $\text{OOD-Loss} \gets \frac{1}{B_{\text{OOD}}}\sum\min_{k}\inf\langle P, M\rangle$
      \State $\mathcal{L} \gets \text{InD-Loss} - \beta \times \text{OOD-Loss}$
      \State{\textbf{Backward Propagation:}}
      \State $\frac{\partial \mathcal{L}}{\partial f_{\theta}(x_{\text{InD}})} \gets -\frac{1}{B_{\text{InD}}}\times\left(\frac{1}{f_{\theta}(x_{\text{InD}})}\right)^{\top}y^{k}$
      \State $\frac{\partial \mathcal{L}}{\partial f_{\theta}(x_{\text{OOD}})} \gets \frac{\beta}{\lambda B_{\text{OOD}}}\times(\log v^{*} + \frac{1}{2})$\Comment{detailed derivations in Section \ref{gradient}}
      \State $\frac{\partial \mathcal{L}}{\partial \theta} \gets \frac{\partial \mathcal{L}}{\partial f_{\theta}(x)}\frac{\partial f_{\theta}(x)}{\partial \theta}$
      \State Update model parameters with gradients
    \EndFor
  \end{algorithmic}
  \label{Alg:Training}
\end{algorithm}

\subsection{Statistical Properties of the WOOD Loss}
In this section, we investigate the statistical properties of the WOOD loss. Suppose $(x_{\text{InD}}, k) \in \mathcal{D}_{\text{InD}}$ and $x_\text{OOD} \in \mathcal{D}_{\text{OOD}}$ are independent samples, $N_{\text{InD}}$ and $N_{\text{OOD}}$ are the number of samples in $\mathcal{D}_{\text{InD}}$ and $\mathcal{D}_{\text{OOD}}$, respectively, $f_{\hat{\theta}}$ is the empirical risk minimizer, and $\mathcal{F}$ is the hypothesis space of functions mapping into $\mathbb{R}^{K}$, we have the statistical learning bound of the WOOD loss given in Theorem \ref{them4-3}.

\begin{theorem}
For any $\delta > 0$, with probability at least $1-\delta$, it holds that
\begin{align}
    &\mathbb{E}[\mathcal{L}(\mathcal{D}_\text{InD}, \mathcal{D}_\text{OOD})] - \inf_{f_{\theta}\in\mathcal{F}} \mathbb{E}[\mathcal{L}(\mathcal{D}_\text{InD}, \mathcal{D}_\text{OOD})]
    \notag\\
    \leq & \frac{4\sqrt{2}}{m}\mathcal{R}_{N_{\text{InD}}}(\mathcal{F}) + 2\sqrt{\frac{(1-m)^{2}\log (1/\delta)}{2N_{\text{InD}}m^{2}}} 
    \notag\\
    &+ \beta\left(16\alpha_{M}\mathcal{R}_{N_{\text{OOD}}}(\mathcal{F}) + 2\sqrt{\frac{\alpha_{M}^{2}\log (1/\delta)}{2N_{\text{OOD}}}}\right),
\end{align}
\label{them4-3}
where $\alpha_{M}$ is the maximum entry in all matrices $M$, $m$ is the minimum value in all predicted softmax scores $f_{\hat{\theta}}(x_\text{InD})$, $\mathcal{R}_{N_{\text{InD}}}$ and $\mathcal{R}_{N_{\text{OOD}}}$  are Rademacher complexity \cite{10.5555/944919.944944}.
\end{theorem}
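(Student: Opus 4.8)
The plan is to treat this as a standard excess-risk (generalization) bound and control it through Rademacher complexity. Writing $\hat{\mathcal{L}}$ for the empirical loss in Equation~(\ref{Eq3-5}) and $\mathcal{L}$ for its expectation, the first step is the usual reduction: since $f_{\hat{\theta}}$ is the empirical risk minimizer, $\hat{\mathcal{L}}(f_{\theta}) \geq \hat{\mathcal{L}}(f_{\hat{\theta}})$ for every $f_\theta \in \mathcal{F}$, so adding and subtracting $\hat{\mathcal{L}}(f_{\hat{\theta}})$ and $\hat{\mathcal{L}}(f^{*})$ for a near-minimizer $f^{*}$ of the expected loss yields
\[
\mathbb{E}[\mathcal{L}(f_{\hat{\theta}})] - \inf_{f_\theta \in \mathcal{F}} \mathbb{E}[\mathcal{L}(f_\theta)] \;\leq\; 2 \sup_{f_\theta \in \mathcal{F}} \bigl| \hat{\mathcal{L}}(f_\theta) - \mathbb{E}[\mathcal{L}(f_\theta)] \bigr|.
\]
Because the two terms of the WOOD loss depend on the disjoint and independent samples $\mathcal{D}_\text{InD}$ and $\mathcal{D}_\text{OOD}$, this supremum splits into an InD cross-entropy part and a $\beta$-weighted OOD Wasserstein part, which I would bound separately; the leading factor $2$ is exactly the $2$ multiplying both square-root terms in the statement.

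For each part I would apply McDiarmid's bounded-difference inequality to concentrate the supremum around its mean, which requires the per-sample range of each summand. For the InD term, every predicted probability $f_{\hat{\theta}}(x_\text{InD})^{\top}y^{k}$ lies in $[m,1]$ and $-\log(\cdot)$ is $\tfrac{1}{m}$-Lipschitz there, so two cross-entropy values differ by at most $\tfrac{1}{m}(1-m)$; hence each of the $N_\text{InD}$ coordinates changes the empirical average by at most $\tfrac{1-m}{N_\text{InD}\,m}$, and McDiarmid produces the term $\sqrt{(1-m)^{2}\log(1/\delta)/(2N_\text{InD}m^{2})}$. For the OOD term, the score $\min_k \inf_P \langle P, M\rangle$ transports one unit of mass at unit cost at most $\alpha_M$, so it is bounded in $[0,\alpha_M]$; each of the $N_\text{OOD}$ coordinates then has range $\alpha_M/N_\text{OOD}$, giving $\sqrt{\alpha_M^{2}\log(1/\delta)/(2N_\text{OOD})}$. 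These match the two concentration terms precisely.

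It remains to bound the two expected suprema by Rademacher complexities of $\mathcal{F}$. I would first symmetrize, introducing a factor $2$ and replacing each expected deviation by a Rademacher average of the loss composed with $\mathcal{F}$, and then apply a Lipschitz contraction lemma to strip the loss and leave $\mathcal{R}_{N_\text{InD}}(\mathcal{F})$ and $\mathcal{R}_{N_\text{OOD}}(\mathcal{F})$. For the InD part the contraction constant is the Lipschitz constant $\tfrac{1}{m}$ of $-\log$; combining the excess-risk factor, the symmetrization factor, and the vector-valued contraction constant (which contributes the $\sqrt{2}$) gives the coefficient $\tfrac{4\sqrt{2}}{m}$. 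For the OOD part I would show the score is Lipschitz in $f_\theta(x)$ with a constant controlled by $\alpha_M$, and the same chain of constants then yields $16\alpha_M$. The $\min_k$ operator causes no trouble here, since a pointwise minimum of Lipschitz functions is Lipschitz and $\min$ is a $1$-Lipschitz contraction that does not inflate the Rademacher average.

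The main obstacle is the OOD part: establishing that the Wasserstein-distance-based score $x \mapsto \inf_{P}\langle P, M\rangle$ is Lipschitz in the marginal $f_\theta(x)$ with the stated dependence on $\alpha_M$, and that its range is $\alpha_M$. Since the score is the value of a linear program over the transport polytope $\Pi(r_1,r_2)$ of Equation~(\ref{Eq3-2}), I would argue through Kantorovich/LP duality: the optimal dual potentials are controlled by the entries of $M$, so perturbing $f_\theta(x)$ changes the optimal value by at most a multiple of $\alpha_M$ times the perturbation. Care is also needed because the dynamic cost matrix $M_\text{Dy}$ itself depends on $f_\theta(x)$ (Equation~(\ref{Eq3-9})), so the Lipschitz estimate must account for the simultaneous variation of both $M$ and the marginal; here the boundedness by $\alpha_M$ together with Propositions~\ref{Propert-1} and~\ref{Propert-2}, which pin down the geometry of the score over the simplex, should make the estimate tractable. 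Once this Lipschitz and boundedness estimate is in hand, the remaining symmetrization and contraction steps above are routine.
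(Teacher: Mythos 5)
Your proposal follows essentially the same route as the paper's proof: the same ERM decomposition into a cross-entropy part and a $\beta$-weighted Wasserstein part, McDiarmid's inequality with the per-sample ranges $(1-m)/m$ and $\alpha_M$, symmetrization, and Talagrand's contraction with Lipschitz constants $\sqrt{2}/m$ and $4\alpha_M$, yielding exactly the stated coefficients. The only divergence is that the paper simply cites Proposition B.10 of Frogner et al.\ for the $4\alpha_M$-Lipschitz property of the Wasserstein loss rather than deriving it from LP duality as you sketch, and it does not address the subtlety you correctly flag about $M_{\text{Dy}}$ depending on $f_\theta(x)$.
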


The detailed derivation is provided in Appendix \ref{Appendix-B}. The upper bound in Theorem \ref{them4-3} guarantees that with a sufficient number of training samples, the difference between the minimized loss value and the global optimum is bounded by an arbitrarily small value, which indicates the loss value achieved by the empirical minimizer will approach the global optimum.

\section{Experiments}
\label{experiment}
In this section, we introduce the experiment setup and demonstrate the performance of WOOD on different InD and OOD datasets. 

\subsection{Experiment Setup}
\subsubsection{Datasets}
The datasets used in the experiments are MNIST \cite{lecun-mnisthandwrittendigit-2010}, FashionMNIST \cite{xiao2017}, CIFAR-10 \cite{krizhevsky2009learning}, SVHN \cite{37648}, and downsampled TinyImageNet. TinyImageNet is a subset of ImageNet \cite{5206848} which contains images with the shape of $3 \times 64 \times 64$ in 200 classes. To keep the image dimension of the TinyImageNet consistent with CIFAR-10 and SVHN, we further downsample it into the dimension of $3 \times 32 \times 32$ by resizing (TinyImageNet-r) or cropping (TinyImageNet-c). The basic information of these datasets, including the number of classes, image dimension, the number of training samples, and the number of testing samples are summarized in Table \ref{table5-1}. These datasets are used as InD and OOD samples to validate the performance of WOOD, which is indicated in the first column of Table \ref{table5-2}.

\begin{table*}[ht]
    \centering
    \caption{Basic Information of Datasets}
    \begin{tabular}{ccccc}
    \specialrule{.1em}{.05em}{.05em}
    \\[-.7em]
    Datasets & Number of Class & Image Dimension & Number of Training & Number of Testing \\
    \specialrule{.1em}{.05em}{.05em}
    \\[-.7em]
    MNIST & 10 & $28\times 28$ & 60000 & 10000 \\
    FashionMNIST & 10 & $28\times 28$ & 60000 & 10000 \\
    CIFAR-10 & 10 & $3\times 32\times 32$ & 50000 & 10000 \\
    SVHN & 10 & $3\times 32\times 32$ & 73257 & 26032 \\
    TinyImageNet-r & 200 & $3\times 32\times 32$ & 100000 & 10000\\
    TinyImageNet-c & 200 & $3\times 32\times 32$ & 100000 & 10000\\
    \\[-.7em]
    \specialrule{.1em}{.05em}{.05em}
    \end{tabular}
    \label{table5-1}
\end{table*}

\subsubsection{Hyperparameters}
\label{hyperparameters}
Considering the proposed WOOD is a general framework to enable classifiers to detect OOD samples, we test its performance on the state-of-the-art classifier DenseNet \cite{8099726} in the experiments. For the DenseNet, we follow the setup introduced in \cite{8099726}, with model depth $100$, growth rate $12$, and dropout rate $0$. 

$\beta$ is the hyperparameter used in WOOD loss to balance the focus of the classifier in classifying the InD samples and detecting the OOD samples. The optimal $\beta = 0.1$ is obtained in the experiments that enable the classifier to receive a good performance in OOD detection without decaying its performance in classification. The value of $\beta$ is determined by the grid-search with the range of $[0,1]$ and the step size as $0.1$. $B_{\text{InD}}$ and $B_{\text{OOD}}$ in Algorithm \ref{Alg:Training} are the batch sizes of InD and OOD samples in each training iteration. We set $B_{\text{InD}} = 50$ and $B_{\text{OOD}} = 10$ in training. $\epsilon$ in Equation (\ref{Eq3-7}) is determined by the $95\%$ TNR on InD testing samples.

\subsubsection{Baseline Methods and Evaluation Metrics}
To demonstrate the performance of the WOOD, two state-of-the-art OOD detection methods ODIN \cite{liang2020enhancing} and Maha \cite{lee2018simple} are selected as the baseline methods.

All the OOD detection methods are compared by two evaluation metrics: (1) The FNR of OOD samples at $95\%$ TNR, which indicates how many OOD samples are misidentified when the threshold $\epsilon$ is set to ensure $95\%$ of InD samples are correctly identified. The FNR at $95\%$ TNR is the lower the better. (2) The area under the receiver operating characteristic curve (AUROC), which is the higher the better.

\subsection{Results}
\begin{table*}[!ht]
    \centering
    \caption{Experiment Results of the Proposed WOOD Framework and Baseline Methods}
    \begin{tabular}{cccccc}
    \specialrule{.1em}{.05em}{.05em}
    \\[-.7em]
    Datasets & \diagbox{Metrics}{Methods} & ODIN & Maha & \makecell{WOOD (proposed) \\ binary distance matrix}  & \makecell{WOOD (proposed) \\ dynamic distance matrix} \\
    \\[-.7em]
    \specialrule{.1em}{.05em}{.05em}
    \\[-.7em]
    \multirow{2}{*}{\makecell{InD: MNIST \\ OOD: FashionMNIST}} & FNR ($95\%$  TNR) & 0.8754& $<0.000001$& $<0.000001$& $<0.000001$\\
    & AUROC & 0.741& $>0.999999$& $>0.999999$& $>0.999999$ \\
    \\[-.7em]
    \hline
    \\[-.7em]
    \multirow{2}{*}{\makecell{InD: FashionMNIST \\ OOD: MNIST}} & FNR ($95\%$  TNR) & 0.7809& 0.1653& $<0.000001$& $<0.000001$ \\
    & AUROC & 0.891& 0.969& $>0.999999$& $>0.999999$ \\
    \\[-.7em]
    \hline
    \\[-.7em]
    \multirow{2}{*}{\makecell{InD: CIFAR-10 \\ OOD: SVHN}} & FNR ($95\%$  TNR)& 0.1591& 0.1870& 0.0005 & 0.0046 \\
    & AUROC& 0.962 & 0.934 & 0.999 & 0.998\\
    \\[-.7em]
    \hline
    \\[-.7em]
    \multirow{2}{*}{\makecell{InD: CIFAR-10 \\ OOD: TinyImageNet-r}} & FNR ($95\%$  TNR)& 0.0430& 0.1724& 0.0095 & 0.0069 \\
    & AUROC& 0.991 & 0.934 & 0.988 & 0.993\\
    \\[-.7em]
    \hline
    \\[-.7em]
    \multirow{2}{*}{\makecell{InD: CIFAR-10 \\ OOD: TinyImageNet-c}} & FNR ($95\%$  TNR)& 0.1340& 0.4878& $<0.000001$& 0.0003\\
    & AUROC& 0.975 & 0.898 & $>0.999999$ & 0.999 \\
    \\[-.7em]
    \specialrule{.1em}{.05em}{.05em}
    \end{tabular}
    \label{table5-2}
\end{table*}
The performances of the proposed WOOD and baseline methods are compared on five different combinations of InD and OOD datasets. The experiment results are summarized in Table \ref{table5-2}. In general, the WOOD method receives comparable performance when using binary and dynamic distance matrices, and both of them outperform the baseline methods consistently. More specifically, both the ODIN and Maha methods show strength in detecting OOD samples from some datasets while failing in others. For example, ODIN performs well in detecting SVHN, TinyImageNet-r, and TinyImageNet-c from CIFAR-10. However, its performance decays in distinguishing MNIST and FashionMNIST from each other. Maha receives good performance in most cases except for detecting TinyImageNet-c from CIFAR-10. Compared with ODIN and Maha, the proposed WOOD method improves both the FNR at $95\%$ TNR and AUROC in all the cases and receives consistent outstanding performance in identifying the OOD samples.

\begin{figure}[!ht]
    \includegraphics[width=\linewidth]{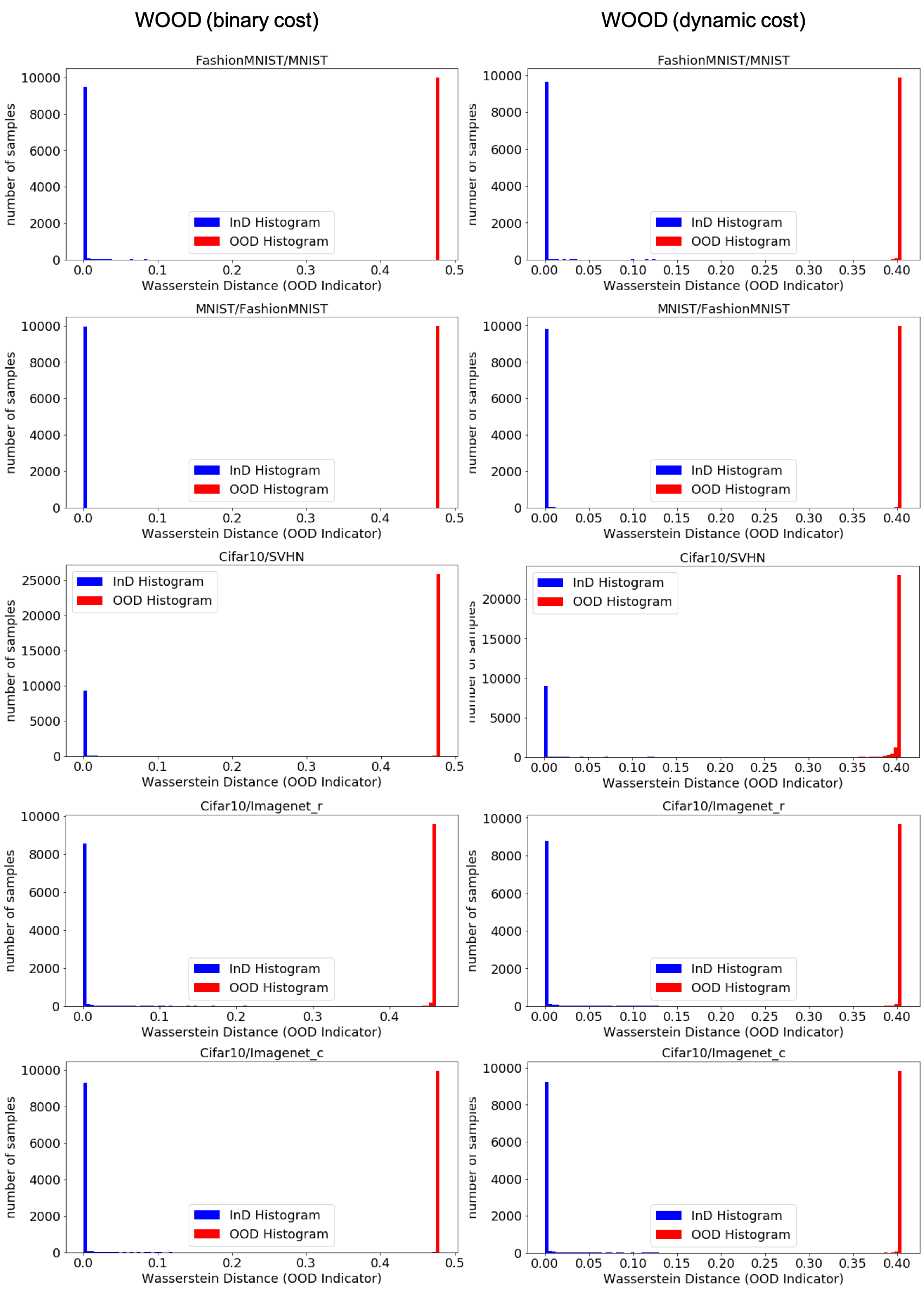}
    \centering 
    \caption{Histogram of OOD Indicator in WOOD}
    \label{fig:hist_viz}
\end{figure}

\begin{figure}[!ht]
    \includegraphics[width=0.9\linewidth]{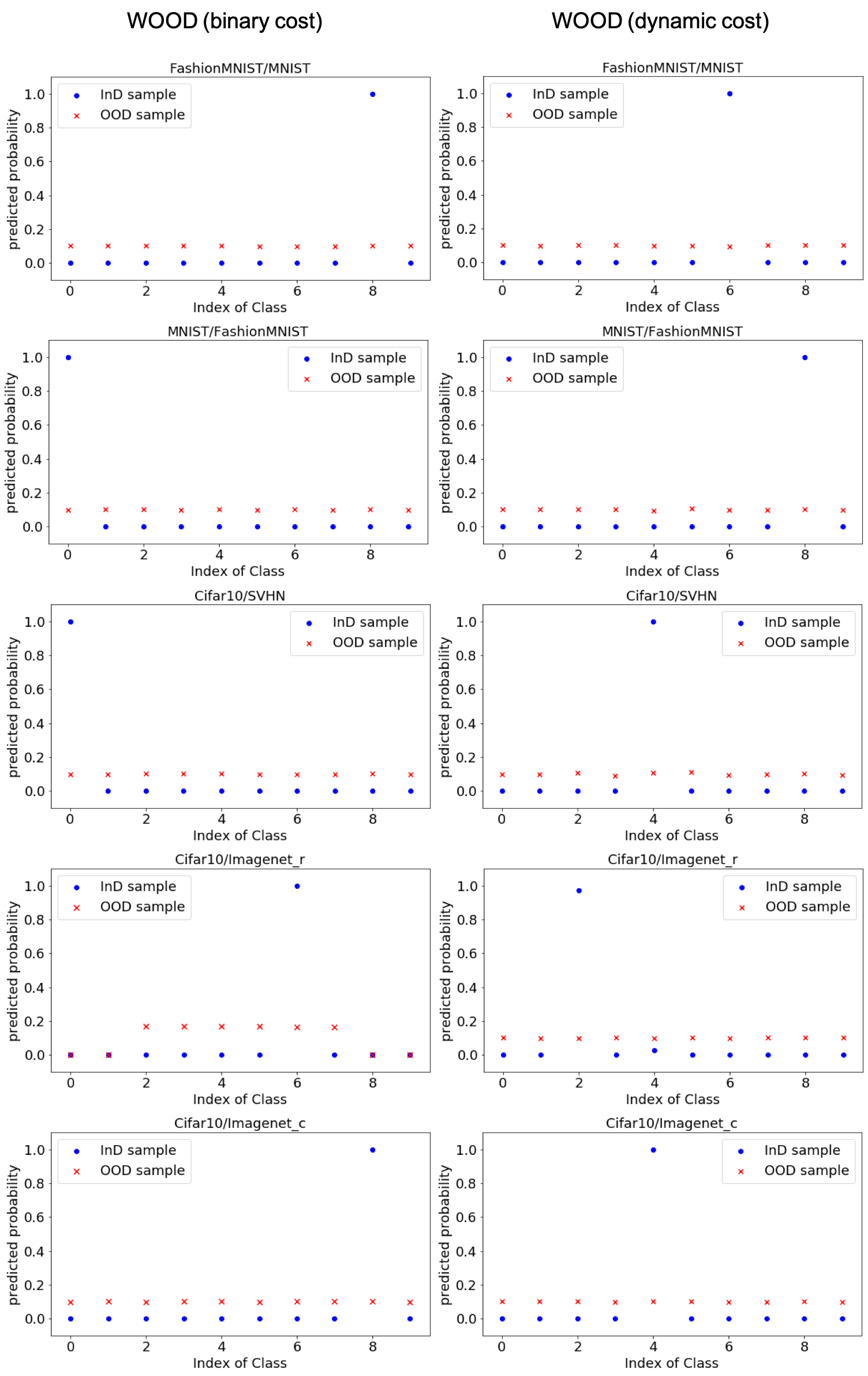}
    \centering 
    \caption{Predicted Probability of Randomly Selected InD and OOD Sample}
    \label{fig:sample_pred_viz}
\end{figure}

The score function in the WOOD is used to distinguish InD and OOD samples. Ideally, after training the classifier with WOOD loss, the values of the score function for the InD samples should be concentrated close to 0, while the values of the score function for the OOD samples should be concentrated away from 0. We demonstrate the histogram of the score function for different InD and OOD datasets in Fig. \ref{fig:hist_viz}. The title of each subplot represents the InD and OOD datasets, for example ``FashionMNIST/MNIST" represents the FashionMNIST is the InD dataset and MNIST is the OOD dataset. In Fig. \ref{fig:hist_viz}, blue bars represent the histogram of score function for InD samples and red bars represent the histogram of score function for OOD samples. We can clearly figure out that after training the classifier with WOOD loss, the score function can successfully distinguish the InD and OOD datasets. This also explains why the WOOD framework consistently receives great performance in identifying all the OOD datasets.

As we discussed in Section \ref{sec:3-4-2}, when using the dynamic distance matrix in WOOD loss, for the OOD samples, the classifier is trained to output predicted probability away from any labels and close to $(\frac{1}{K},...,\frac{1}{K})$. To validate this property, we randomly select one OOD sample and one InD sample and generate the predicted probabilities from the trained classifier, which is shown in Fig. \ref{fig:sample_pred_viz}. We can find out that when using WOOD loss with dynamic distance matrix, the predicted probability of the InD sample is close to its label (the score function close to 0) and the predicted probability of the OOD sample is close to $(\frac{1}{10},...,\frac{1}{10})$ (the score function close to the maximum). This property also indicates that the value of the WOOD loss for the trained classifier converges to the minimum value, in which the first term (cross-entropy loss) approaches to $0$ and the second term (Wasserstein-based score function) approaches to the maximum value determined by the distance matrix. When using WOOD loss with binary distance matrix, the trained classifier has a similar property.

\subsection{Selection of Distance Matrix in WOOD}

From the results shown in Table \ref{table5-2}, we can conclude that there is no significant difference in the quantitative performance of WOOD with different distance matrices. As we discussed in Section \ref{Complexity}, theoretically, the dynamic distance matrix will reduce the computational complexity of Wasserstein-based score function from $O(K^{3})$ to $O(K^{2})$. We also validate this property in the experiment. To eliminate the influence from other computations, such as the calculation of cross-entropy loss, back-propagation, etc., we use the calculation of the Wasserstein-based score function for the single image in CIFAR-10 as an example. When using the dynamic distance matrix, the average computational time of the Wasserstein-based score function for 100 samples is $0.0035$ seconds. In comparison, when using the binary distance matrix, the average computational time of the Wasserstein-based score function for 100 samples is $0.0307$ seconds. Given $K = 10$ in CIFAR-10, the improvement of computational complexity is consistent with the theoretical analysis. Moreover, such improvement will be more significant when the number of InD classes $K$ further increases. Thus, if the Wasserstein distance is specifically used in OOD detection, we recommend using the dynamic distance matrix when $K$ is large.

It is worth noting that the Wasserstein distance with the binary distance matrix is not limited to be used in the OOD detection task and can be more generally used as the loss function in other computer vision tasks, such as object classification and detection \cite{NIPS2015_a9eb8122}. Thus, if the Wasserstein distance is used in OOD detection and other tasks simultaneously, we would recommend using the binary distance matrix to keep the distance measure consistent. This paper focuses on the OOD detection and tests both matrices in the experiments. Considering most of the InD datasets in our experiments have 10 classes, the difference in computational complexity is not significant when using these two matrices.

\subsection{Code Availability}
The implementation of this work is available at \url{https://github.com/wyn430/WOOD}. 

\section{Conclusion}
\label{conclusion}
OOD detection is a crucial task in protecting DNNs from cyber attacks. It can also improve the system's resilience and security. This paper proposes a Wasserstein-based out-of-distribution detection (WOOD) method to strengthen the classifiers with the ability to identify OOD samples. The performance of the proposed method is validated by different combinations of InD and OOD datasets and demonstrated outstanding results compared with other OOD detection methods. The advantages of the WOOD method can be summarized into five aspects: (1) It is generally compatible with various classifiers and has a little influence on the model complexity and training time; (2) The designed WOOD loss function for training the classifiers well follows the human's intuitive rationales and intelligence in identifying OOD samples, i.e., trying to keep OOD samples away from InD samples instead of trying to assign a specific label to them; (3) The Wasserstein distance fully exploits the dissimilarity between output discrete distributions for InD and OOD samples; (4) The specifically designed dynamic distance matrix reduces the computational complexity of WOOD loss and the score function; (5) The analysis of statistical learning bound provides the theoretical guarantees in training classifiers with the proposed WOOD loss function.

\ifCLASSOPTIONcaptionsoff
  \newpage
\fi



%
\bibliographystyle{IEEEtran}
\bibliography{IEEEabrv,references}



%


%

\pagebreak
\newpage

\appendices
\section{Properties of Wasserstein Distance between discrete distributions}
\label{Appendix-D}

It is worth noting that the distance matrix $M$ directly influences the properties of Wasserstein distance. Here we require the distance matrix $M$ to satisfy

\begin{equation}
    \begin{cases}
    M \in \mathbb{R}^{K \times K}_{+},\\
    M[t_{1},t_{2}] = 0 \text{ if and only if } t_{1}=t_{2}\leq K, \\
    M[t_{1},t_{2}]\leq M[t_{1},t_{3}]+M[t_{3},t_{2}], \text{ for any } t_{1},t_{2},t_{3}\leq K,
    \end{cases}
\label{Eq2-1}
\end{equation}
where $t_{1}, t_{2}, t_{3}$ denote the index of entries in matrix $M$. Under these conditions, the Wasserstein distance is a well-defined distance metric that satisfies the axioms of a distance

\begin{equation}
    \begin{cases}
    W(r_{1},r_{2}) = W(r_{2},r_{1}),
    \\
    W(r_{1},r_{3}) \leq W(r_{1},r_{2}) + W(r_{2},r_{3}), 
    \\
    W(r_{1},r_{2}) = 0 \text{ if and only if } r_{1} = r_{2},
    \end{cases}
    \label{Eq2-2}
\end{equation}
where $r_{1},r_{2},r_{3}$ are discrete distributions \cite{villani2008optimal}. The metric property lays the foundation of Wasserstein distance to be used to measure and indicate the dissimilarity between distributions.

However, the Wasserstein distance is usually hindered by its high computational cost. It has been proved that the computational complexity of the Wasserstein distance is at least $O(K^{3}\log (K))$ when comparing two histograms of dimension $K$ \cite{5459199}. To reduce the computational complexity of Wasserstein distance and adapt it to the training procedure of deep neural networks, the Sinkhorn distance is proposed by regularizing the Wasserstein distance with an entropic term \cite{NIPS2013_af21d0c9}. Such regularization empirically reduces the computational complexity of Wasserstein distance to $O(K^{2})$ as well as preserving its metric property. The formulation of the Sinkhorn distance is 

\begin{align}
     W^{\lambda}(r_1,r_2) &= \text{inf}_{P\in \Pi(r_1,r_2)} \langle P,M \rangle - \frac{1}{\lambda}h(P),
     \notag\\
     \text{s.t.}\;\;
     P \mathbf{1}_{K} &= r_2,
     \notag\\
     P^{\top} \mathbf{1}_{K} &= r_1,
     \notag\\
     h(P) &= -\sum_{t_{1},t_{2}}P[t_{1},t_{2}]\log P[t_{1},t_{2}],
    \label{Eq2-5}
\end{align}
where $h(P)$ is the entropic regularization term; $\lambda \in [0,+ \infty]$ is the weight; $t_{1}$ and $t_{2}$ denote the indices of entries in matrix $P$.

\section{Proof of Propositions \ref{Propert-1} and \ref{Propert-2}}
\label{Appendix-A}
In the classification task, the class label is usually represented by the one-hot vector, for example, $y^{k} = (0,...,\underbrace{1}_{k^{th}},...,0)$ is the label of the $k^{th}$ class, and $f_{\theta}(x) \in \mathbb{R}^{K},  f_{\theta}(x)^{\top}\mathbf{1}_{K} = 1$ is the softmax score from the classifier for an arbitrary sample $x$. Determined by these two marginal distributions, there is only one feasible solution of $P$ satisfying the constraint defined in Equation (\ref{Eq3-2}), which is
\begin{align}
    P^{*} = \left[ \mathbf{0}_{K},...,\underbrace{f_{\theta}(x)}_{k^{th}},...,\mathbf{0}_{K}\right].
    \label{Eq3-3}
\end{align}
In this case, the calculation of Wasserstein distance $W(f_{\theta}(x),y^{k})$ can be reduced into
\begin{align}
    W_{M_{\text{Dy}}}(f_{\theta}(x),y^{k}) = \langle P^{*}, M_{\text{Dy}}\rangle.
    \label{Eq3-4}
\end{align}
Substitute the Equations (\ref{Eq3-9}) and (\ref{Eq3-3}) into Equation (\ref{Eq3-4}), we have
    \begin{align}
        W_{M_{\text{Dy}}}(f_{\theta}(x),y^{k}) &= f_{\theta}(x)^{\top}({\mathbf{1}}_{K}-f_{\theta}(x))
        \notag\\
        &= f_{\theta}(x)^{\top}{\mathbf{1}}_{K}-f_{\theta}(x)^{\top}f_{\theta}(x)
        \notag\\
        &= 1-\sum_{i=1}^{K}\left(f_{\theta}(x)[i]\right)^{2},
        \label{EqA-1}
    \end{align}
where $f_{\theta}(x)[i]$ is the $i^{th}$ element in vector $f_{\theta}(x)$.The Equation (\ref{EqA-1}) shows that different labels $y^{k}$ will not influence the value of Wasserstein distance with dynamic distance matrix, which completes the proof of Proposition \ref{Propert-1}

The maximum value of $W_{M_{\text{Dy}}}(f_{\theta}(x),y^{k})$ with respect to (with respect to) $f_{\theta}(x)$ is achieved when the $\sum_{i=1}^{K}\left(f_{\theta}(x)[i]\right)^{2}$ reaches its minimum value under the constraint that $\sum_{i=1}^{K}f_{\theta}(x)[i] = 1$. Following the Cauchy-Schwarz inequality, $\sum_{i=1}^{K}\left(f_{\theta}(x)[i]\right)^{2}$ reaches the minimum value when $f_{\theta}(x)[1] = ... = f_{\theta}(x)[K] = \frac{1}{K}$, which completes the proof of Proposition \ref{Propert-2}.

\section{Proof of Proposition \ref{prop:4-1}}
\label{Appendix-C}
Since $f_i(\theta)$ is differentiable, it is continuous. Therefore for each $\epsilon>0$, there exists a $\delta_i$, such that for any $\theta\in \mathcal{B}\left(\theta_0,\delta\right)$,
\begin{equation}
    \left|f_i(\theta)-f_i(\theta_0)\right|<\epsilon/2
\end{equation}
Given that $f_i(\theta_0)$ are mutually different, without loss of generality, we assume 
\begin{equation}
    f_1(\theta_0)-f_j(\theta_0)<-M,j\geq 2.
\end{equation}
Therefore for any $\theta\in\mathcal{B}\left(\theta_0,\min_i\delta_i\right)$,
\begin{eqnarray}
    f_1(\theta)-f_j(\theta)&\leq&f_1(\theta_0)+\epsilon/2-f_j(\theta_0)+\epsilon/2\nonumber\\
    &\leq& f_1(\theta_0)-f_j(\theta_0)+\epsilon\nonumber\\
    &<& -M+\epsilon.
\end{eqnarray}
By setting $\epsilon=M/n,n\geq 2$, we have:
\begin{equation}
    f_1(\theta)-f_j(\theta)<0,
\end{equation}
and thus
\begin{equation}
    \min_j f_j(\theta)=f_1(\theta).
\end{equation}
Therefore
\begin{equation}
    \frac{\min_j f_j(\theta)-\min_j f_j(\theta_0)}{\theta-\theta_0}=\frac{f_1(\theta)-f_1(\theta_0)}{\theta-\theta_0}.
\end{equation}
Let $n\rightarrow\infty$, we have $\epsilon\rightarrow 0$, and $\delta\rightarrow 0$. Thus we have:
\begin{equation}
    \lim_{\theta\rightarrow\theta_0}\frac{\min_j f_j(\theta)-\min_j f_j(\theta_0)}{\theta-\theta_0}=\nabla f_1(\theta_0).
\end{equation}
The proof of Proposition \ref{prop:4-1} is completed.

\section{Gradient of Wasserstein Distance}
\label{Appendix-E}
The definition of Wasserstein distance in Equation (\ref{Eq3-1}) implies that the Wasserstein distance is a linear programming (LP) and its gradient with respect to $f_{\theta}(x)$ can be computed via Lagrangian duality. The dual formulation of  Equation (\ref{Eq3-1}) is expressed as
\begin{align}
    W^\text{D} (a,b) &= \sup_{a,b} a^{\top}y^{k} + b^{\top}f_{\theta}(x),
    \notag\\
    \text{s.t.} \;\; &a[t_{1}] + b[t_{2}] \leq M[t_{1},t_{2}],
    \notag\\
    & a,b \in \mathbb{R}^{K},
    \notag\\
    & k \in \mathcal{K}_\text{InD}
    \label{Eq4-1}
\end{align}
where $a,b$ are the dual variables, $a[t_{1}]$ is the $t_{1}^{th}$ element in vector $a$, $b[t_{2}]$ is the $t_{2}^{th}$ element in vector $b$, and $M[t_{1},t_{2}]$ is the element in the $t_{1}^{th}$ row and $t_{2}^{th}$ column of $M$. Since the primal problem is an LP, the optimal values of primal and dual problems are equal. So the value of dual variable $b$ at the optimal point is the desired gradient of the Wasserstein distance with respect to $f_{\theta}(x)$ in Equation~(\ref{eq:multiply}). The problem is reduced to derive the value of dual variables at the optimal point. That is, to compute $b^{*}$, which is commonly implemented by solving the Lagrangian dual problem. However, the classic solution to the Langrangian dual problem has a high computational cost \cite{5459199}. 

To improve the computation efficiency, researchers introduced a smoothed primal problem by adding an entropic regularization term and revise the Wasserstein distance into the Sinkhorn distance \cite{NIPS2013_af21d0c9}. The smoothed primal problem is formulated as
\begin{align}
    W(f_{\theta}(x),y^{k^*}) &= \inf_{P} \langle P,M \rangle - \frac{1}{\lambda}h(P),
    \notag\\
    \text{s.t.}\;\;
    h(P) &= -\sum_{t_{1},t_{2}}P[t_{1},t_{2}]\log P[t_{1},t_{2}]
    \notag\\
    P\mathbf{1}_{K} &= y^{k^*},
    \notag\\
    P^{\top}\mathbf{1}_{K} &= f_{\theta}(x).
    \label{Eq4-2}
\end{align}

The Lagrangian of Equation (\ref{Eq4-2}) is
\begin{align}
    W^{L}(P, a, b) &= \langle P,M \rangle - \frac{1}{\lambda}h(P) 
    \notag\\
    & + \langle a, P \mathbf{1}_{K} - y^{k}\rangle + \langle b, P^{\top} \mathbf{1}_{K} - f_{\theta}(x)\rangle.
    \label{Eq4-3}
\end{align}

Taking the derivative of Equation (\ref{Eq4-3}) with respect to $P$, we have
\begin{align}
    \frac{\partial W^{L}(P, a, b)}{\partial P[t_{1},t_{2}]}&= M[t_{1},t_{2}] + \frac{1}{\lambda}(\log(P[t_{1},t_{2}])+1) 
    \notag\\
    & + a[t_{1}] + b[t_{2}].
    \label{Eq4-4}
\end{align}

Hence, the solution of $\frac{\partial W^{L}(P, a, b)}{\partial P[t_{1},t_{2}]} = 0$ is
\begin{align}
    P[t_{1},t_{2}] &= \exp^{-\lambda a[t_{1}]-\frac{1}{2}}\exp^{-\lambda M[t_{1},t_{2}]}\exp^{-\lambda b[t_{2}]-\frac{1}{2}}.
    \label{Eq4-5}
\end{align}

Given the kernel matrix $K = \exp^{-\lambda M}$, vectors $u = \exp^{-\lambda a - \frac{1}{2}}, v = \exp^{-\lambda b - \frac{1}{2}}$, we have the solution $P = \text{diag}(u)K\text{diag}(v)$, and the value of vectors $u,v$ directly determine the dual variables $a,b$. Considering the solution must satisfy the marginalized constraints, we have 

\begin{align}
    &\left\{
    \begin{aligned}\text{diag}(u)K\text{diag}(v) \mathbf{1}_{K} &= y^{k},\\          \text{diag}(v)K^{\top}\text{diag}(u) \mathbf{1}_{K} &= f_{\theta}(x), \end{aligned}\right.
    \notag\\
    &\left\{
    \begin{aligned}u \odot (Kv) &= y^{k},\\          v \odot (K^{\top}u) &= f_{\theta}(x). \end{aligned}\right.
    \label{Eq4-6}
\end{align}

The $u$ and $v$ can be updated iteratively by Sinkhorn-Knopp algorithm \cite{NIPS2013_af21d0c9}, which is given in Algorithm \ref{Alg:Sinkhorn}.  
\begin{algorithm}[H]
  \caption{Sinkhorn-Knopp Algorithm}
  \begin{algorithmic}[1]
    \Inputs{$y^{k}, f_{\theta}(x), \lambda, K$}
    \State{\textbf{while} $v$ not converge \textbf{do}}
    \State\;\;\;\;\;$v \gets f_{\theta}(x)/\left(K^{\top} y^{k}/(Kv)\right)$
    \State{\textbf{end while}}
    \State{$u \gets y^{k}/Kv$}
  \end{algorithmic}
  \label{Alg:Sinkhorn}
\end{algorithm}

Given the value of $v$ when converging as $v^{*}$, the gradient of Wasserstein distance with respect to $f_{\theta}(x)$ is given as

\begin{align}
    \nabla_{f_{\theta}(x)}\,W(f_{\theta}(x), y^{k}) = b^{*} = -\frac{1}{\lambda}(\log v^{*} + \frac{1}{2}).
    \label{Eq4-10}
\end{align}

\section{Proof of Statistical Learning Bound}
\label{Appendix-B}
Let the softmax score $f_{\theta}(x)$ and $y^{k} = (0,...,\underbrace{1}_{k^{th}},...,0)$ be two discrete probability distributions on $K$ classes, and the Kullback-Leibler (KL) divergence between two discrete distributions is defined as,
\begin{equation}
    D(y^{k} ||f_{\theta}(x)) = \sum_{i=1}^{K}y^{k}[i]\log \frac{y^{k}[i]}{f_{\theta}(x)}.
    \label{EqB-1}
\end{equation}

Given that $\lim_{x\rightarrow 0^{+}} x\log x = 0$, because $y^{k}$ is the one-hot label and $f_{\theta}(x)$ is the predicted softmax score, Equation (\ref{EqB-1}) can be reduced into
\begin{equation}
    D(y^{k}||f_{\theta}(x)) = -\log f_{\theta}(x)[k]. 
    \label{EqB-2}
\end{equation}

In the classification task, Equation (\ref{EqB-2}) denotes the cross-entropy loss which is used for correctly classifying the in-distribution samples. The WOOD loss introduced in Section \ref{sec:WOOD Loss} included the cross-entropy loss for InD classification and Wasserstein loss for OOD detection.
\begin{align}
    \mathcal{L}(\mathcal{D}_\text{InD}, \mathcal{D}_\text{OOD}) &= \frac{1}{N_\text{InD}}\sum_{(x_{\text{InD}}, y^{k}) \in \mathcal{D}_\text{InD}}-\log \left(f_{\theta}(x_{\text{InD}})^{\top}y^{k} \right)
    \notag\\
    &-\beta \frac{1}{N_\text{OOD}}\sum_{x_{\text{OOD}} \in \mathcal{D}_\text{OOD}}\min_{k}\inf_{P}\langle P, M\rangle,
    \notag\\
    \text{s.t.}\;\;
    P\mathbf{1}_{K} &= y^{k},
    \notag\\
    P^{\top}\mathbf{1}_{K} &= f_{\theta}(x_{\text{InD}}),
    \notag\\
    k &\in \mathcal{K}_\text{InD}.
    \label{EqB-3}
\end{align}

Suppose we have the independent and identically distributed training samples $\mathcal{D}$, which contains InD and OOD training samples $(x_{\text{InD}}, y^{k}) \in \mathcal{D}_\text{InD}, x_{\text{OOD}} \in \mathcal{D}_\text{OOD}$, respectively, the empirical risk $\hat{R}_{\mathcal{D}}$ and expected risk $R$ is denoted as

\begin{align}
    \hat{R}_{\mathcal{D}}(f_{\theta}) &= \hat{\mathbb{E}}_{\mathcal{D}}[\mathcal{L}(f_{\theta}(x), y^{k})]
    \notag\\
    R(f_{\theta}) &= {\mathbb{E}}[\mathcal{L}(f_{\theta}(x), y^{k})],
    \label{EqB-4}
\end{align}
where $\hat{\mathbb{E}}_{\mathcal{D}}$ denotes the empirical average over the dataset $\mathcal{D}$, and $\mathbb{E}$ represents the expectation. Let ${R}_{1}(f_{\theta}),  \hat{R}_{1, \mathcal{D}_\text{InD}}(f_{\theta})$ denote the expected risk and empirical risk of cross-entropy loss for InD samples, respectively, ${R}_{2}(f_{\theta}), \hat{R}_{2, \mathcal{D}_\text{OOD}}(f_{\theta})$ denote the expected risk and empirical risk of Wasserstein loss for OOD samples, respectively. We have 
\begin{align}
    \hat{R}_{\mathcal{D}}(f_{\theta}) &= \hat{R}_{1, \mathcal{D}_\text{InD}}(f_{\theta}) - \beta\hat{R}_{2, \mathcal{D}_\text{OOD}}(f_{\theta}),
    \notag\\
    R(f_{\theta}) &= {R}_{1}(f_{\theta}) - \beta {R}_{2}(f_{\theta}).
\end{align}

The proof of Theorem \ref{them4-3} can be divided into three steps. First, we prove that the difference between the empirically minimized WOOD loss and the globally minimized WOOD loss is bounded by the summation of uniform concentration bounds of the cross-entropy loss ($l_{1}$) and the Wasserstein loss ($l_{2}$). Second, the uniform concentration bounds of these two losses are derived, respectively. For each of them, McDiarmid's Inequality \cite{10.5555/2011878} is introduced to bound the difference between expected and empirical minimized risks with the Rademacher complexity \cite{10.5555/944919.944944} of the space defined by the loss function ($\mathcal{L}_{1}$ or $\mathcal{L}_{2}$). Third, the Talagrand's Lemma \cite{ledoux1991probability} is further used to bound the Rademacher complexity of $\mathcal{L}_{1}$ or $\mathcal{L}_{2}$ with the Rademacher complexity of the space defined by the classifier ($\mathcal{F}$).  

\begin{lemma}
    Let $\hat\theta$ be the estimated model parameter of the classifier $f$. Let $f_{\hat{\theta}}, f_{\theta^{*}} \in \mathcal{F}$ be the minimizer of the empirical risk $\hat{R}_{\mathcal{D}}$ and expected risk $R$, respectively. Then
    \begin{align*}
        R(f_{\hat{\theta}}) \leq R(f_{\theta^{*}}) &+ 2\sup_{f_{\theta}\in \mathcal{F}} |R_{1}(f_{\theta}) - \hat{R}_{1, \mathcal{D}_\text{InD}}(f_{\theta})| 
        \notag\\
        &+ 2\beta \sup_{f_{\theta}\in \mathcal{F}} |R_{2}(f_{\theta}) -  \hat{R}_{2, \mathcal{D}_\text{OOD}}(f_{\theta})|.
    \end{align*}
    \label{lemB-1}
\end{lemma}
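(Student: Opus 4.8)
The plan is to follow the classical excess-risk decomposition for empirical risk minimization, adapted to the two-part structure of the WOOD loss. The starting point is to write the excess risk $R(f_{\hat\theta}) - R(f_{\theta^*})$ as a telescoping sum by adding and subtracting the empirical risk evaluated at each minimizer:
\begin{align*}
R(f_{\hat\theta}) - R(f_{\theta^*}) = \bigl[R(f_{\hat\theta}) - \hat{R}_{\mathcal{D}}(f_{\hat\theta})\bigr] + \bigl[\hat{R}_{\mathcal{D}}(f_{\hat\theta}) - \hat{R}_{\mathcal{D}}(f_{\theta^*})\bigr] + \bigl[\hat{R}_{\mathcal{D}}(f_{\theta^*}) - R(f_{\theta^*})\bigr].
\end{align*}
Since $f_{\hat\theta}$ minimizes the empirical risk $\hat{R}_{\mathcal{D}}$ over $\mathcal{F}$ and $f_{\theta^*} \in \mathcal{F}$, the middle bracket is non-positive and can be dropped.

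Next I would bound each of the two remaining brackets by the uniform deviation $\sup_{f_{\theta}\in\mathcal{F}} |R(f_\theta) - \hat{R}_{\mathcal{D}}(f_\theta)|$, which is legitimate because both $f_{\hat\theta}$ and $f_{\theta^*}$ belong to $\mathcal{F}$. This yields
\begin{align*}
R(f_{\hat\theta}) - R(f_{\theta^*}) \leq 2\sup_{f_{\theta}\in\mathcal{F}} |R(f_\theta) - \hat{R}_{\mathcal{D}}(f_\theta)|.
\end{align*}

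The final step is to split this uniform deviation into the cross-entropy and Wasserstein contributions. Using the decompositions $R(f_\theta) = R_{1}(f_\theta) - \beta R_{2}(f_\theta)$ and $\hat{R}_{\mathcal{D}}(f_\theta) = \hat{R}_{1,\mathcal{D}_\text{InD}}(f_\theta) - \beta\hat{R}_{2,\mathcal{D}_\text{OOD}}(f_\theta)$ stated just before the lemma, together with the triangle inequality, for each fixed $f_\theta$ we have
\begin{align*}
|R(f_\theta) - \hat{R}_{\mathcal{D}}(f_\theta)| \leq |R_{1}(f_\theta) - \hat{R}_{1,\mathcal{D}_\text{InD}}(f_\theta)| + \beta\,|R_{2}(f_\theta) - \hat{R}_{2,\mathcal{D}_\text{OOD}}(f_\theta)|.
\end{align*}
Taking the supremum over $\mathcal{F}$ and bounding the supremum of the sum by the sum of the suprema, then substituting back, delivers the claimed inequality.

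The argument is entirely elementary, relying only on the optimality of the empirical minimizer and two applications of the triangle inequality, so there is no genuine obstacle. The only point requiring care is that the WOOD loss is a \emph{difference} of the two component risks rather than a sum, so the sign of the $\beta$-weighted term must be tracked correctly through the decomposition; since the triangle inequality is insensitive to sign, the negative coefficient poses no difficulty and the $\beta$ simply factors out of the second deviation term.
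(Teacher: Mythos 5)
Your proposal is correct and follows essentially the same route as the paper's own proof: the telescoping decomposition exploiting that $f_{\hat\theta}$ minimizes $\hat{R}_{\mathcal{D}}$, the bound by twice the uniform deviation over $\mathcal{F}$, and the triangle-inequality split of $R = R_1 - \beta R_2$ into the two component deviations with the supremum of the sum bounded by the sum of suprema. Your remark that the minus sign on the $\beta$-weighted term is absorbed harmlessly by the triangle inequality is exactly the point the paper relies on implicitly.
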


\begin{proof}
Because $f_{\hat{\theta}}$ is the minimizer of $\hat{R}_{\mathcal{D}}$
\begin{align*}
    &R(f_{\hat{\theta}}) - R(f_{\theta^{*}}) 
    \notag\\
    = &R(f_{\hat{\theta}}) - \hat{R}_{\mathcal{D}}(f_{\hat{\theta}}) + \hat{R}_{\mathcal{D}}(f_{\hat{\theta}}) - R(f_{\theta^{*}})
    \notag\\
    \leq &R(f_{\hat{\theta}}) - \hat{R}_{\mathcal{D} }(f_{\hat{\theta}}) + \hat{R}_{\mathcal{D} }(f_{\theta^{*}}) - R(f_{\theta^{*}})
    \notag\\
    \leq &2\sup_{f_{\theta}\in \mathcal{F}} \left|R(f_{\theta}) - \hat{R}_{\mathcal{D}}(f_{\theta})\right|
    \notag\\
    = &2\sup_{f_{\theta}\in \mathcal{F}} \left|R_{1}(f_{\theta}) - \beta R_{2}(f_{\theta}) -  \left(\hat{R}_{1, \mathcal{D}_{\text{InD}}}(f_{\theta}) - \beta \hat{R}_{2, \mathcal{D}_{\text{OOD}}}(f_{\theta})\right)\right|
    \notag\\
    \leq  &2\sup_{f_{\theta}\in \mathcal{F}} \left(\left|R_{1}(f_{\theta}) - \hat{R}_{1, \mathcal{D}_{\text{InD}}}(f_{\theta})\right| + \beta \left|R_{2}(f_{\theta}) -  \hat{R}_{2, \mathcal{D}_{\text{OOD}}}(f_{\theta})\right|\right)
    \notag\\
    \leq  &2\sup_{f_{\theta}\in \mathcal{F}} \left|R_{1}(f_{\theta}) - \hat{R}_{1, \mathcal{D}_{\text{InD}}}(f_{\theta})\right| 
    \notag\\
    &+ 2\beta \sup_{f_{\theta}\in \mathcal{F}} \left|R_{2}(f_{\theta}) -  \hat{R}_{2, \mathcal{D}_{\text{OOD}}}(f_{\theta})\right|
\end{align*}
\end{proof}

Therefore, to derive the upper bound of $R(f_{\hat{\theta}})$, we need to establish the uniform concentration bounds for the cross-entropy loss and Wasserstein loss, respectively. First, we define the space of cross-entropy loss induced by the hypothesis space $\mathcal{F}$ as
\begin{align}
    \mathcal{L}_{1} = \{l_{1\theta}: (x_{\text{InD}}, y^{k}) \rightarrow D(y^{k}||f_{\theta}(x_{\text{InD}})): f_{\theta}\in \mathcal{F}\}.
\end{align}

\begin{theorem}[McDiarmid's Inequality \cite{10.5555/2011878}]. 
Let $\mathcal{D} = {\mathcal{X}_{1}, ..., \mathcal{X}_{N}}$ be a subset of $N$ independent and identically distributed random variables. Assume there exists $C > 0$ such that $f: \mathcal{D} \rightarrow \mathbb{R}$ satisfies the following stability condition 
\begin{align*}
    |f(\mathcal{X}_{1}, ..., \mathcal{X}_{i}, ..., \mathcal{X}_{N}) - f(\mathcal{X}_{1}, ..., \mathcal{X}^{'}_{i}, ..., \mathcal{X}_{N})| \leq C
\end{align*}
for all $i = 1, ..., N$ and any $\mathcal{X}_{1}, ..., \mathcal{X}_{N}, \mathcal{X}_{i}^{'} \in \mathcal{D}$. Then for any $\epsilon > 0$, denoting $f(\mathcal{X}_{1}, ..., \mathcal{X}_{N})$ by $f(\mathcal{D})$, it holds that
\begin{align*}
    \mathbb{P}(f(\mathcal{D}) - \mathbb{E}[f(\mathcal{D})]\geq \epsilon) \leq \exp{(-\frac{2\epsilon^{2}}{NC^{2}})}
\end{align*}
\label{themB-2}
\end{theorem}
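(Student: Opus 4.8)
The plan is to prove this via the Azuma--Hoeffding martingale method, realizing $f(\mathcal{D}) - \mathbb{E}[f(\mathcal{D})]$ as a telescoping sum of bounded, mean-zero martingale increments and then applying a Chernoff-type argument to each. First I would construct the Doob martingale obtained by revealing the coordinates one at a time. Setting $V_{i} = \mathbb{E}[f(\mathcal{D}) \mid \mathcal{X}_{1}, \ldots, \mathcal{X}_{i}]$ for $i = 0, 1, \ldots, N$, the tower property gives $V_{0} = \mathbb{E}[f(\mathcal{D})]$ and $V_{N} = f(\mathcal{D})$, so that $f(\mathcal{D}) - \mathbb{E}[f(\mathcal{D})] = \sum_{i=1}^{N} D_{i}$ where the increments $D_{i} = V_{i} - V_{i-1}$ satisfy $\mathbb{E}[D_{i} \mid \mathcal{X}_{1}, \ldots, \mathcal{X}_{i-1}] = 0$.

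The key step is to show that, conditioned on $\mathcal{X}_{1}, \ldots, \mathcal{X}_{i-1}$, each increment $D_{i}$ lies in an interval of width at most $C$. Here I would use the independence of the $\mathcal{X}_{i}$ to write $V_{i}$ as a partial expectation of $f$ over the unrevealed coordinates $\mathcal{X}_{i+1}, \ldots, \mathcal{X}_{N}$ with the first $i$ coordinates frozen at their observed values. Replacing only the $i$-th coordinate changes the integrand by at most $C$ by the stated stability condition, and this bound survives the integration, so the conditional supremum and infimum of $D_{i}$ over the value of $\mathcal{X}_{i}$ differ by no more than $C$. This reduces the problem to controlling the conditional moment generating function of a mean-zero random variable supported on an interval of length $C$.

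I would then invoke Hoeffding's lemma, which yields $\mathbb{E}[e^{s D_{i}} \mid \mathcal{X}_{1}, \ldots, \mathcal{X}_{i-1}] \leq e^{s^{2} C^{2} / 8}$ for every $s \in \mathbb{R}$. Peeling off one increment at a time using the tower property gives $\mathbb{E}[e^{s \sum_{i} D_{i}}] \leq e^{s^{2} N C^{2} / 8}$. Markov's inequality applied to $e^{s(f(\mathcal{D}) - \mathbb{E}[f(\mathcal{D})])}$ then produces, for any $s > 0$, the bound $\mathbb{P}(f(\mathcal{D}) - \mathbb{E}[f(\mathcal{D})] \geq \epsilon) \leq e^{-s\epsilon + s^{2} N C^{2} / 8}$, and optimizing the exponent over $s$ at $s = 4\epsilon/(N C^{2})$ delivers exactly the claimed tail bound $e^{-2\epsilon^{2}/(N C^{2})}$.

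The main obstacle is the conditional bounded-differences step of the second paragraph: carefully justifying, through the independence assumption and the representation of $V_{i}$ as a partial expectation, that the conditional fluctuation of $D_{i}$ given the past is governed by the single-coordinate bound $C$ rather than by the full variation of $f$. Everything downstream---Hoeffding's lemma, the martingale telescoping via the tower property, and the final one-parameter optimization---is routine once this interval-width estimate is established.
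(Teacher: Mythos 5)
Your proposal is correct, but note that the paper does not prove this statement at all: McDiarmid's inequality appears in Appendix~\ref{Appendix-B} only as an imported classical result, cited to the literature and used as a black box in the derivation of the learning bound. What you have written is the canonical proof --- the Doob martingale $V_i = \mathbb{E}[f(\mathcal{D})\mid \mathcal{X}_1,\ldots,\mathcal{X}_i]$ revealing one coordinate at a time, the conditional bounded-differences estimate showing each increment $D_i$ lies in an interval of width $C$ (where independence is exactly what lets you represent $V_i$ as a partial expectation over the unrevealed coordinates and push the single-coordinate bound through the integral), Hoeffding's lemma giving $\mathbb{E}[e^{sD_i}\mid \mathcal{X}_1,\ldots,\mathcal{X}_{i-1}] \leq e^{s^2C^2/8}$, the tower-property peeling, and the Chernoff optimization at $s = 4\epsilon/(NC^2)$, whose exponent $-4\epsilon^2/(NC^2) + 2\epsilon^2/(NC^2) = -2\epsilon^2/(NC^2)$ matches the stated bound. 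All steps check out, and you correctly identified the conditional interval-width estimate as the only nontrivial point. Two minor remarks: your argument uses only independence, not identical distribution, so the i.i.d.\ hypothesis in the paper's statement is stronger than needed; and the paper's phrasing ``$\mathcal{X}_i' \in \mathcal{D}$'' is loose --- the stability condition should hold for all points of the underlying sample space, which is what your proof implicitly (and correctly) assumes when taking the supremum and infimum over the value of the $i$-th coordinate.
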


\begin{definition}[Rademacher Complexity \cite{10.5555/944919.944944}]
Let $\mathcal{G}$ be a family of mapping from $\mathcal{Z}$ to $\mathbb{R}$, and $S=(z_{1}, ..., z_{N})$ a fixed sample from $\mathcal{Z}$. The empirical Rademacher complexity of $\mathcal{G}$ with respect to $S$ is defined as
\begin{align}
    \hat{\mathcal{R}}_{S}(\mathcal{G}) = \mathbb{E}_{\sigma}\left[\sup_{g\in\mathcal{G}}\frac{1}{N}\sum_{i=1}^{n}\sigma_{i}g(z_{i})\right]
\end{align}
where $\sigma = (\sigma_{1},...,\sigma_{N})$, with $\sigma_{i}$ is independent uniform random variables taking values in $\{+1, -1\}$. $\sigma_{i}$ is called the Rademacher random variables. The Rademacher complexity is defined by taking expectation with respect to the samples $S$,
\begin{align}
    \mathcal{R}_{N}(\mathcal{G}) = \mathbb{E}_{S}[\hat{\mathcal{R}}_{S}(\mathcal{G})]
\end{align}
\label{defB-3}
\end{definition}

\begin{lemma}[Theorem 3 in \cite{vu17353}]
    The upper bound of the KL divergence (\ref{EqB-2}) is 
    \begin{equation}
        D(y^{k}||f_{\theta}(x)) \leq \sum_{i=1}^{K} \frac{(y^{k}[i])^{2}}{f_{\theta}(x)[i]} - 1.
    \end{equation}
    \label{lemB-4}
\end{lemma}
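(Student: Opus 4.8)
The plan is to establish the stated bound as an instance of the standard fact that the Kullback--Leibler divergence is dominated by the chi-squared divergence, using only the elementary inequality $\log t \le t - 1$, valid for all $t > 0$. Writing $p = y^k$ and $q = f_\theta(x)$ for the two distributions on the $K$ classes, I would begin from the definition $D(y^k \| f_\theta(x)) = \sum_{i=1}^K y^k[i]\log\frac{y^k[i]}{f_\theta(x)[i]}$ and treat each summand separately.

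First I would apply $\log t \le t - 1$ with $t = y^k[i]/f_\theta(x)[i]$ at each index $i$ for which $y^k[i] > 0$, obtaining
\begin{equation}
    y^k[i]\log\frac{y^k[i]}{f_\theta(x)[i]} \le y^k[i]\left(\frac{y^k[i]}{f_\theta(x)[i]} - 1\right) = \frac{(y^k[i])^2}{f_\theta(x)[i]} - y^k[i]. \nonumber
\end{equation}
Since $f_\theta(x)$ is a softmax output, every $f_\theta(x)[i] > 0$, so each ratio is well defined; and at indices with $y^k[i] = 0$ both sides vanish under the convention $0\log 0 = 0$, so the same inequality holds trivially there as well.

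Next I would sum over $i = 1,\dots,K$ and invoke the normalization $\sum_{i=1}^K y^k[i] = 1$, which cancels the $\sum_i y^k[i]$ term and yields exactly
\begin{equation}
    D(y^k \| f_\theta(x)) \le \sum_{i=1}^K \frac{(y^k[i])^2}{f_\theta(x)[i]} - 1, \nonumber
\end{equation}
the claimed bound. I would additionally remark that the right-hand side is precisely the chi-squared divergence $\chi^2(y^k \| f_\theta(x)) = \sum_i (y^k[i] - f_\theta(x)[i])^2/f_\theta(x)[i]$: expanding the square and using that both $y^k$ and $f_\theta(x)$ are probability vectors collapses it to $\sum_i (y^k[i])^2/f_\theta(x)[i] - 1$. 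Thus the lemma is simply the well-known inequality $D \le \chi^2$ specialized to the present setting.

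There is no substantive obstacle in this argument; the only points requiring care are the bookkeeping at indices where $y^k[i] = 0$ and the verification that the denominators $f_\theta(x)[i]$ are strictly positive, both of which are immediate for a softmax output. As a sanity check, I note that since $y^k$ is one-hot here, the general inequality further collapses to $-\log f_\theta(x)[k] \le 1/f_\theta(x)[k] - 1$, which is again exactly $\log t \le t - 1$ with $t = 1/f_\theta(x)[k]$, confirming the bound in the specific case the lemma is applied to.
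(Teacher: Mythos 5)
Your proof is correct. Note, however, that the paper itself does not prove Lemma \ref{lemB-4} at all: it imports the statement wholesale as Theorem 3 of the cited reference \cite{vu17353}, which is exactly the fact you identified, namely that the KL divergence is dominated by the chi-squared divergence, $D(p\|q)\leq\chi^{2}(p\|q)=\sum_{i}p_{i}^{2}/q_{i}-1$ for probability vectors $p,q$ with $q$ strictly positive. So your contribution is a self-contained elementary derivation of that cited result: the pointwise application of $\log t\leq t-1$ with $t=y^{k}[i]/f_{\theta}(x)[i]$, summed against the normalization $\sum_{i}y^{k}[i]=1$, is a standard and airtight way to prove $D\leq\chi^{2}$, and your bookkeeping at indices with $y^{k}[i]=0$ (where both sides vanish under $0\log 0=0$) and the positivity of the softmax denominators are exactly the right points to check. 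What your route buys is that the appendix no longer depends on an external reference for this step; what the paper's route buys is brevity, since in the paper's actual use case $y^{k}$ is one-hot, so the lemma collapses (as your sanity check observes) to the one-line inequality $-\log s\leq 1/s-1$ for $s=f_{\theta}(x)^{\top}y^{k}$, which is all that Lemma \ref{lemB-5} and the subsequent concentration argument ever need. One could even argue your closing remark is the cleanest proof of the lemma as used: for one-hot labels the general chi-squared machinery is unnecessary, and citing $\log t\leq t-1$ directly would suffice.
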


\begin{lemma}
    Let $\mathcal{S} = \{f_{\theta}(x_{1}), ..., f_{\theta}(x_{N_\text{InD}})\}$ be a set of predicted labels of the InD training dataset $\{(x_{1}, y^{k_{1}}), ..., (x_{N_\text{InD}}, y^{k_{N_{\text{InD}}}})\}$, in which $f_{\theta} \in \mathbb{R}^{K}$, and $\{y^{k_{1}}, ..., y^{k_{N_{\text{InD}}}}\}$ are ground-truth one-hot label vectors. Suppose the minimum value of $f_{\theta}(x_{\text{InD}})[i],$ for all $x_{\text{InD}} \in \mathcal{D}_{\text{InD}}, i \in \mathcal{K}$ is $m \in (0,1)$, it holds that 
    \begin{align}
        D(y^{k}||f_{\theta}(x_{\text{InD}})) \leq \frac{1}{m} - 1, \text{ for all } (x_{\text{InD}}, k) \in \mathcal{D}_{\text{InD}}
    \end{align}
    \label{lemB-5}
\end{lemma}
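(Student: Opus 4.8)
The plan is to derive this bound directly from Lemma~\ref{lemB-4}, which already supplies an upper bound on the KL divergence in terms of the reciprocals of the predicted softmax entries. The key observation is that $y^{k}$ is a one-hot vector, so the sum appearing in Lemma~\ref{lemB-4} collapses to a single term, after which the hypothesis that $m$ lower-bounds every softmax entry finishes the argument.

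First I would invoke Lemma~\ref{lemB-4} to write
\[
    D(y^{k}||f_{\theta}(x_{\text{InD}})) \leq \sum_{i=1}^{K} \frac{(y^{k}[i])^{2}}{f_{\theta}(x_{\text{InD}})[i]} - 1.
\]
Since $y^{k}$ is a one-hot vector whose $k$-th entry equals $1$ and whose other entries equal $0$, we have $(y^{k}[i])^{2} = 1$ for $i = k$ and $(y^{k}[i])^{2} = 0$ otherwise. Hence every term in the sum vanishes except the one indexed by $k$, giving
\[
    \sum_{i=1}^{K} \frac{(y^{k}[i])^{2}}{f_{\theta}(x_{\text{InD}})[i]} = \frac{1}{f_{\theta}(x_{\text{InD}})[k]}.
\]

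Next I would use the hypothesis that $m$ is the minimum value taken over all predicted softmax entries, so that in particular $f_{\theta}(x_{\text{InD}})[k] \geq m$ for the true class $k$ of every InD sample. Because $m \in (0,1)$ and the softmax entries are strictly positive, this yields $\frac{1}{f_{\theta}(x_{\text{InD}})[k]} \leq \frac{1}{m}$, and combining the three displays gives
\[
    D(y^{k}||f_{\theta}(x_{\text{InD}})) \leq \frac{1}{m} - 1,
\]
uniformly over all $(x_{\text{InD}}, k) \in \mathcal{D}_{\text{InD}}$, as claimed.

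Because each step is a direct substitution, I do not expect any real obstacle; the entire argument rests on the one-hot structure of $y^{k}$ collapsing the sum and on the definition of $m$ as a uniform lower bound on the softmax entries. The only point needing minor care is that the bound must hold \emph{uniformly} over the whole InD dataset, which is precisely why $m$ is taken as the global minimum across all samples and all coordinates rather than a per-sample quantity. This uniformity is what matters downstream, since the resulting constant $\frac{1}{m}-1$ is exactly the quantity that will play the role of the stability constant $C$ when McDiarmid's inequality (Theorem~\ref{themB-2}) is later applied to the cross-entropy loss.
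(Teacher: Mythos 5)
Your proposal is correct and follows essentially the same route as the paper's own proof: apply Lemma~\ref{lemB-4}, use the one-hot structure of $y^{k}$ to collapse the sum to $\frac{1}{f_{\theta}(x_{\text{InD}})^{\top}y^{k}}$, and bound that entry below by the global minimum $m$. No gaps; your added remark about uniformity and the later role of $\frac{1}{m}-1$ as the McDiarmid stability constant matches how the lemma is used downstream.
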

\begin{proof}
From Lemma \ref{lemB-4}, we know that $D(y^{k}||f_{\theta}(x_{\text{InD}})) \leq \sum_{k=1}^{K}\frac{(y^{k}[i])^{2}}{f_{\theta}(x_{\text{InD}})[i]} - 1$. Because $y^{k}$ is the ground-truth label of classification problem, it is an one-hot vector with only one entry as $1$ and all the others are $0$. Assume that the predicted label from the model can not be $0$, we have 
\begin{align}
    D(y^{k}||f_{\theta}(x_{\text{InD}})) &\leq \frac{1}{f_{\theta}(x_{\text{InD}})^{\top}y^{k}} - 1
    \notag\\
    &\leq \frac{1}{m} - 1, \text{ for all } (x_{\text{InD}}, k) \in \mathcal{D}_{\text{InD}}
\end{align}
\end{proof}

\begin{theorem}
For all $\delta\in(0,1)$, with probability at least $1-\delta$, for all $l_{1\theta} \in \mathcal{L}_{1}$, we have 

\begin{align}
    \mathbb{E}[l_{1\theta}] - \hat{\mathbb{E}}_{\mathcal{D}_{\text{InD}}}[l_{1\theta}]&\leq 2\mathcal{R}_{N_{\text{InD}}}(\mathcal{L}_{1}) + \sqrt{\frac{(1-m)^{2}\log (1/\delta)}{2N_{\text{InD}}m^{2}}}  
\end{align}
\end{theorem}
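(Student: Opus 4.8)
The plan is to follow the classical three-move recipe for a Rademacher-complexity generalization bound: encode the one-sided uniform deviation as a single scalar statistic, control its fluctuations with McDiarmid's inequality (Theorem \ref{themB-2}), and control its mean by a symmetrization argument. First I would introduce the supremum deviation
$$\Phi(\mathcal{D}_{\text{InD}}) = \sup_{l_{1\theta}\in\mathcal{L}_{1}}\left(\mathbb{E}[l_{1\theta}] - \hat{\mathbb{E}}_{\mathcal{D}_{\text{InD}}}[l_{1\theta}]\right),$$
so that proving the theorem reduces to a high-probability upper bound on $\Phi$, since $\mathbb{E}[l_{1\theta}] - \hat{\mathbb{E}}_{\mathcal{D}_{\text{InD}}}[l_{1\theta}] \leq \Phi(\mathcal{D}_{\text{InD}})$ holds for every $l_{1\theta}$ simultaneously, which is exactly the uniform (``for all $l_{1\theta}$'') statement required.

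The second step is to verify the bounded-difference (stability) condition McDiarmid's inequality demands, and this is where Lemma \ref{lemB-5} does the essential work: it gives $0 \leq D(y^{k}\|f_{\theta}(x_{\text{InD}})) \leq (1-m)/m$ for every InD sample, so the loss class $\mathcal{L}_{1}$ is uniformly bounded with range $(1-m)/m$. Replacing a single InD sample then alters the empirical average $\hat{\mathbb{E}}_{\mathcal{D}_{\text{InD}}}[l_{1\theta}]$ by at most $\frac{1}{N_{\text{InD}}}\cdot\frac{1-m}{m}$, and because a supremum of functions sharing a common modulus of change inherits that modulus, $\Phi$ satisfies the stability condition with constant $C = \frac{1-m}{m N_{\text{InD}}}$. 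Applying Theorem \ref{themB-2} with this $C$ and equating the tail probability to $\delta$ yields, with probability at least $1-\delta$,
$$\Phi(\mathcal{D}_{\text{InD}}) \leq \mathbb{E}[\Phi(\mathcal{D}_{\text{InD}})] + \sqrt{\frac{(1-m)^{2}\log(1/\delta)}{2N_{\text{InD}}m^{2}}},$$
which is exactly the second term of the claimed bound.

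The third and central step is to bound the mean $\mathbb{E}[\Phi(\mathcal{D}_{\text{InD}})]$ by the Rademacher complexity. I would introduce an independent ghost copy of the InD dataset, rewrite the population expectation inside the supremum as an expectation over this ghost sample, push the supremum inside via Jensen's inequality, and then insert Rademacher signs $\sigma_{i}\in\{\pm1\}$, justified because swapping each sample with its ghost counterpart leaves the joint distribution unchanged. This standard symmetrization produces $\mathbb{E}[\Phi(\mathcal{D}_{\text{InD}})] \leq 2\mathcal{R}_{N_{\text{InD}}}(\mathcal{L}_{1})$ directly from Definition \ref{defB-3}, matching the first term. Combining the two displays and noting that the bound on $\Phi$ dominates every individual deviation completes the proof. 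The main obstacle is the symmetrization step, as it is the only non-mechanical part: one must verify carefully that the ghost substitution and the introduction of the Rademacher variables preserve the expectation, after which the factor of $2$ and the reduction to $\mathcal{R}_{N_{\text{InD}}}(\mathcal{L}_{1})$ are routine.
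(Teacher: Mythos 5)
Your proposal is correct and follows essentially the same route as the paper's proof: the paper likewise defines $\Phi(\mathcal{D}_{\text{InD}})=\sup_{l_{1\theta}\in\mathcal{L}_{1}}\mathbb{E}[l_{1\theta}]-\hat{\mathbb{E}}_{\mathcal{D}_{\text{InD}}}[l_{1\theta}]$, invokes Lemma \ref{lemB-5} to obtain the bounded-difference constant $\frac{1-m}{mN_{\text{InD}}}$ for McDiarmid's inequality, and then bounds $\mathbb{E}[\Phi(\mathcal{D}_{\text{InD}})]$ by $2\mathcal{R}_{N_{\text{InD}}}(\mathcal{L}_{1})$ via a ghost-sample symmetrization with Rademacher signs. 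No gaps.
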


\begin{proof}
For any $l_{1\theta} \in \mathcal{L}_{1}$, the empirical expectation can be reduced into the empirical risk of the corresponding $f_{\theta}$, which is
\begin{align}
    \hat{\mathbb{E}}_{\mathcal{D}_{\text{InD}}}[l_{1\theta}] = \frac{1}{N_{\text{InD}}}\sum_{i=1}^{N_{\text{InD}}}D(y^{k} || f_{\theta}(x_{\text{InD}}) ) = \hat{R}_{1, \mathcal{D}_{\text{InD}}}(f_{\theta}).
\end{align}

We also have $\mathbb{E}[l_{1\theta}] = R_{1}(f_{\theta})$. Let
\begin{align}
    \Phi(\mathcal{D}_{\text{InD}}) = \sup_{l_{1\theta} \in \mathcal{L}_{1}}\mathbb{E}[l_{1\theta}] - \hat{\mathbb{E}}_{\mathcal{D}_{\text{InD}} }[l_{1\theta}].
\end{align}

Let $\mathcal{D}_{\text{InD}}^{'}$ denote the $i^{th}$ sample of $\mathcal{D}_{\text{InD}} $ is replaced by $(x_{\text{InD}}^{'}, y^{k^{'}})$. By Lemma \ref{lemB-5}, we have
\begin{align}
    &\;\;\Phi(\mathcal{D}_{\text{InD}} ) - \Phi(\mathcal{D}_{\text{InD}}^{'}) 
    \notag\\
    \leq & \sup_{l_{1\theta} \in \mathcal{L}_{1}} \hat{\mathbb{E}}_{\mathcal{D}_{\text{InD}}^{'  }}[l_{1\theta}] - \hat{\mathbb{E}}_{\mathcal{D}_{\text{InD}} }[l_{1\theta}]
    \notag\\
    = & \sup_{f_{\theta}\in \mathcal{F}} \frac{D(y^{k^{'}} || f_{\theta}(x_{\text{InD}}^{'})) - D(y^{k} || f_{\theta}(x_{\text{InD}}))}{N_{\text{InD}} }
    \notag\\
    \leq &\frac{1-m}{m N_{\text{InD}} }.
\end{align}

Similarly we have $\Phi(\mathcal{D}_{\text{InD}}^{'}) - \Phi(\mathcal{D}_{\text{InD}} ) \leq \frac{1-m}{m N_{\text{InD}} }$, thus $\left|\Phi(\mathcal{D}_{\text{InD}}^{'}) - \Phi(\mathcal{D}_{\text{InD}} )\right| \leq \frac{1-m}{m N_{\text{InD}} }$. By Theorem \ref{themB-2}, we have 
\begin{align}
    \Phi(\mathcal{D}_{\text{InD}} ) \leq \mathbb{E}[\Phi(\mathcal{D}_{\text{InD}} )] + \sqrt{\frac{(1-m)^{2}\log (1/\delta)}{2N_{\text{InD}}m^{2}}}.
    \label{EqB-5}
\end{align}

Next, we need to bound $\mathbb{E}[\Phi(\mathcal{D}_{\text{InD}})]$. Suppose $\mathcal{D}_{\text{InD}}^{''} = \{(x_{1}^{''}, y^{k_{1}^{''}}),...,(x^{''}_{N_{\text{InD}} }, y^{k_{N_{\text{InD}}}^{''}})\}$ is another sequence of ghost samples, we have
\begin{align}
    &\mathbb{E}_{\mathcal{D}_{\text{InD}} }[\Phi(\mathcal{D}_{\text{InD}} )]
    \notag\\
    =& \mathbb{E}_{\mathcal{D}_{\text{InD}} }\left[\sup_{l_{1\theta} \in \mathcal{L}_{1}}\mathbb{E}[l_{1\theta}] - \hat{\mathbb{E}}_{\mathcal{D}_{\text{InD}} }[l_{1\theta}]\right]
    \notag\\
    =& \mathbb{E}_{\mathcal{D}_{\text{InD}} }\left[\sup_{l_{1\theta} \in \mathcal{L}_{1}}\mathbb{E}_{\mathcal{D}_{\text{InD}}^{'' }}\left[\hat{\mathbb{E}}_{\mathcal{D}_{\text{InD}}^{'' }}[l_{1\theta}] - \hat{\mathbb{E}}_{\mathcal{D}_{\text{InD}} }[l_{1\theta}]\right]\right]
    \notag\\
    \leq & \mathbb{E}_{\mathcal{D}_{\text{InD}} , \mathcal{D}_{\text{InD}}^{''}}\left[\sup_{l_{1\theta} \in \mathcal{L}_{1}}\hat{\mathbb{E}}_{\mathcal{D}_{\text{InD}}^{''}}[l_{1\theta}] - \hat{\mathbb{E}}_{\mathcal{D}_{\text{InD}} }[l_{1\theta}]\right].
    \label{EqB-6}
\end{align}
We further examine the difference of empirical averages, $\hat{\mathbb{E}}_{\mathcal{D}_{\text{InD}}^{''}}[l_{1\theta}] - \hat{\mathbb{E}}_{\mathcal{D}_{\text{InD}} }[l_{1\theta}]$. Suppose we have two new sets, $\mathcal{S}$ and $\mathcal{S}^{'}$, in which the $i^{th}$ data points in sets $\mathcal{D}_{\text{InD}} $ and $\mathcal{D}_{\text{InD}}^{'' }$ are swapped with the probability of $\frac{1}{2}$. We have $\hat{\mathbb{E}}_{\mathcal{D}_{\text{InD}}^{'' }}[l_{1\theta}] - \hat{\mathbb{E}}_{\mathcal{D}_{\text{InD}} }[l_{1\theta}]$ and $\hat{\mathbb{E}}_{\mathcal{S}^{'}}[l_{1\theta}] - \hat{\mathbb{E}}_{\mathcal{S}}[l_{1\theta}]$ has the same distribution, which is because all the samples are independent and identically distributed and permutation does not change the distribution. So that we have
\begin{align}
    &\hat{\mathbb{E}}_{\mathcal{D}_{\text{InD}}^{'' }}[l_{1\theta}] - \hat{\mathbb{E}}_{\mathcal{D}_{\text{InD}} }[l_{1\theta}] 
    \notag\\
    = &\frac{1}{N_{\text{InD}} }\sum_{i=1}^{N_{\text{InD}} }\left(l_{1\theta}(x_{\text{InD}}^{''}, y^{k^{''}}) - l_{1\theta}(x_{\text{InD}}, y^{k})\right),
\notag\\
    &\hat{\mathbb{E}}_{\mathcal{S}^{'}}[l_{1\theta}] - \hat{\mathbb{E}}_{\mathcal{S}}[l_{1\theta}] 
    \notag\\
    = &\frac{1}{N_{\text{InD}} }\sum_{i=1}^{N_{\text{InD}} }\sigma_{i}\left(l_{1\theta}(x_{\text{InD}}^{''}, y^{k^{''}}) - l_{1\theta}(x_{\text{InD}}, y^{k})\right),
    \label{EqB-7}
\end{align}

where $\sigma_{i}$ is the Rademacher variables introduced in Definition \ref{defB-3}, which means the $i^{th}$ samples in sets $\mathcal{S}^{'}$ and $\mathcal{S}$ are swapped with the probability of $\frac{1}{2}$. Substitute the Equation (\ref{EqB-7}) into Equation (\ref{EqB-6}), we have
\begin{align}
    &\mathbb{E}_{\mathcal{D}_{\text{InD}} }[\Phi(\mathcal{D}_{\text{InD}} )] 
    \notag\\
    \leq &\mathbb{E}_{\mathcal{D}_{\text{InD}} , \mathcal{D}_{\text{InD}}^{''}, \sigma}\left[\sup_{l_{1\theta} \in \mathcal{L}_{1}}\frac{\sum_{i=1}^{N_{\text{InD}} }\sigma_{i}\left(l_{1\theta}(x_{\text{InD}}^{''}, y^{k^{''}}) - l_{1\theta}(x_{\text{InD}}, y^{k})\right)}{N_{\text{InD}} }\right]
    \notag\\
    \leq &\mathbb{E}_{\mathcal{D}_{\text{InD}}^{''}, \sigma}\left[\sup_{l_{1\theta} \in \mathcal{L}_{1}}\frac{1}{N_{\text{InD}} }\sum_{i=1}^{N_{\text{InD}} }\sigma_{i}l_{1\theta}(x_{\text{InD}}^{''}, y^{k^{''}})\right] 
    \notag\\
    &+ \mathbb{E}_{\mathcal{D}_{\text{InD}} , \sigma}\left[\sup_{l_{1\theta} \in \mathcal{L}_{1}}\frac{1}{N_{\text{InD}} }\sum_{i=1}^{N_{\text{InD}} }-\sigma_{i}l_{1\theta}(x_{\text{InD}}, y^{k})\right]
    \notag\\
    = &\mathbb{E}_{\mathcal{D}_{\text{InD}}^{'' }}[\hat{\mathcal{R}}_{\mathcal{D}_{\text{InD}}^{'' }}(\mathcal{\mathcal{L}}_{1})] + \mathbb{E}_{\mathcal{D}_{\text{InD}} }[\hat{\mathcal{R}}_{\mathcal{D}_{\text{InD}} }(\mathcal{\mathcal{L}}_{1})]
    \notag\\
    = &2\mathcal{R}_{N_{\text{InD}} }(\mathcal{L}_{1}).
    \label{EqB-8}
\end{align}
By combining Equations (\ref{EqB-5}, \ref{EqB-8}), we have 
\begin{align}
    \Phi(\mathcal{D}_{\text{InD}} ) &\leq \mathbb{E}[\Phi(\mathcal{D}_{\text{InD}} )] + \sqrt{\frac{(1-m)^{2}\log (1/\delta)}{2N_{\text{InD}} m^{2}}}
    \notag\\
    &\leq 2\mathcal{R}_{N_{\text{InD}} }(\mathcal{L}_{1}) + \sqrt{\frac{(1-m)^{2}\log (1/\delta)}{2N_{\text{InD}} m^{2}}}.
    \label{EqB-9}
\end{align}
\end{proof}

To this point, we have bounded the difference between expected and empirical risks using the Rademacher complexity of the family of cross-entropy loss $\mathcal{L}_{1}$ and a constant. We further try to bound the Rademacher complexity of $\mathcal{L}_{1}$ with the Rademacher complexity of the hypothesis class $\mathcal{F}$. 
\begin{lemma}[Talagrand's Lemma \cite{ledoux1991probability}]
    Let $\mathcal{F}$ be a class of real functions, If $l: \mathbb{R}^{K} \rightarrow \mathbb{R}$ is a $L_{l}$-Lipschitz function and $l(0) = 0$, then $\mathcal{R}_{N}(l \circ \mathcal{F}) \leq L_{l}\mathcal{R}_{N}(\mathcal{F})$
    \label{lemB-7}
\end{lemma}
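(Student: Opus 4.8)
The plan is to establish the Rademacher comparison (contraction) principle by the classical one–coordinate–at–a–time desymmetrization argument, so that composing an $L_l$–Lipschitz map with the class can only shrink the Rademacher average by the factor $L_l$. Unfolding Definition~\ref{defB-3}, it suffices to show
\begin{equation}
    \mathbb{E}_{\sigma}\sup_{f\in\mathcal{F}}\sum_{i=1}^{N}\sigma_i\, l(f(z_i)) \;\leq\; L_l\,\mathbb{E}_{\sigma}\sup_{f\in\mathcal{F}}\sum_{i=1}^{N}\sigma_i\, f(z_i),\nonumber
\end{equation}
since dividing by $N$ returns the stated inequality. First I would prove this by replacing $l$ with $L_l\cdot\mathrm{id}$ on a single coordinate at a time and then iterating over all $N$ coordinates.

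The core step is to isolate the last Rademacher variable $\sigma_N$ and take its conditional expectation over $\{+1,-1\}$. Because $\mathbb{E}_{\sigma_N}$ averages the two sign choices, the supremum over a single $f$ becomes a supremum over a pair $(f,f')$:
\begin{equation}
    \mathbb{E}_{\sigma_N}\sup_{f}\Big[A(f)+\sigma_N\,l(f(z_N))\Big]
    =\tfrac12\sup_{f,f'}\Big[A(f)+A(f')+l(f(z_N))-l(f'(z_N))\Big],\nonumber
\end{equation}
where $A(f)=\sum_{i<N}\sigma_i\,l(f(z_i))$ collects the untouched terms. Next I would apply the Lipschitz bound $l(f(z_N))-l(f'(z_N))\le L_l\,|f(z_N)-f'(z_N)|$ and exploit the symmetry of the supremum under swapping $f\leftrightarrow f'$ to discard the absolute value, which exactly reconstitutes a Rademacher average carrying $L_l f(z_N)$ in place of $l(f(z_N))$. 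Peeling the remaining coordinates identically replaces every $l$ by $L_l\cdot\mathrm{id}$ and yields the displayed inequality; the normalization $l(0)=0$ is the standard vanishing–at–the–origin convention under which this comparison principle is stated in the cited reference, and it is harmless here since the loss may be centered.

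The hard part will be the vector–valued nature of the composition: here $l:\mathbb{R}^{K}\to\mathbb{R}$ acts on the $K$–dimensional softmax output $f_\theta(x)$, whereas the classical scalar contraction principle is cleanest for coordinatewise Lipschitz maps. After isolating a coordinate, the Lipschitz step produces the Euclidean gap $\|f(z_N)-f'(z_N)\|$ rather than a signed scalar, so one cannot directly reinsert a single sign $\sigma_N$. I would resolve this either by invoking a vector–contraction inequality (introducing an auxiliary Rademacher variable per output coordinate) or, since the loss classes $\mathcal{L}_1,\mathcal{L}_2$ actually enter through a scalar contraction of $f_\theta(x)$, by verifying the relevant map is Lipschitz in the single scalar argument used—$f_\theta(x)^{\top}y^{k}$ for the cross-entropy and the quadratic $1-\sum_i f_\theta(x)[i]^2$ for the dynamic–matrix Wasserstein score—thereby reducing to the scalar peeling argument above. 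Pinning down the correct per-loss Lipschitz constant $L_l$, which is precisely what produces the factors $4\sqrt2/m$ and $16\alpha_M$ appearing in Theorem~\ref{them4-3}, is the delicate accounting the subsequent proof must carry out.
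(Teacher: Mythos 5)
The paper does not actually prove this statement: it is imported verbatim as a known result from Ledoux and Talagrand, and the appendix only ever \emph{applies} it (to bound $\mathcal{R}_{N}(\mathcal{L}_1)$ and $\mathcal{R}_{N}(\mathcal{L}_2)$ via Propositions \ref{propB-8} and \ref{propB-11}). So there is no in-paper argument to match yours against; what you have supplied is the standard proof of the contraction principle, and for scalar-valued classes it is correct. The coordinate-peeling step --- conditioning on $\sigma_1,\dots,\sigma_{N-1}$, writing $\mathbb{E}_{\sigma_N}\sup_f$ as $\tfrac12\sup_{f,f'}$, applying the Lipschitz bound, and using the $f\leftrightarrow f'$ symmetry to drop the absolute value --- is exactly the textbook argument, and your remark that $l(0)=0$ is not really needed for the one-sided (no absolute value) Rademacher average in Definition \ref{defB-3} is also accurate.

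Your closing observation is the genuinely valuable part, and it points at a real imprecision in the lemma \emph{as stated here}: the classical scalar contraction principle applies to $l:\mathbb{R}\to\mathbb{R}$ composed coordinatewise, whereas the paper writes $l:\mathbb{R}^{K}\to\mathbb{R}$ acting on a vector-valued hypothesis class $\mathcal{F}$ mapping into $\mathbb{R}^{K}$. After the peeling step the Lipschitz bound produces $\|f(z_N)-f'(z_N)\|_2$, which cannot be re-signed by a single Rademacher variable, so the scalar proof does not go through verbatim. The two repairs you propose are the right ones: either invoke Maurer's vector-contraction inequality (which costs a factor $\sqrt{2}$ and redefines $\mathcal{R}_N(\mathcal{F})$ over $KN$ coordinatewise Rademacher variables --- a constant the paper's Theorem \ref{them4-3} does not account for), or observe that both losses here factor through a scalar functional of $f_\theta(x)$ and reduce to the scalar lemma with the Lipschitz constants $\sqrt{2}/m$ and $4\alpha_M$ that the paper computes. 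Either route closes the gap; the paper, by treating the lemma as a black box, silently elides this issue.
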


Consider a family of loss functions $\mathcal{L}_{1} = \{z \rightarrow l_{1}(f_{\theta}(x), y): f_{\theta}\in\mathcal{F}\}$, in our case, $l_{1}$ represents the cross-entropy loss, $f_{\theta}$ represents the neural network with softmax score as the output (eliminate 0 elements in the output).

\begin{proposition}
For all $(x_{1}, y^{k_{1}}), (x_{2}, y^{k_{2}}) \in \mathcal{D}_{\text{InD}}$, the cross-entropy loss defined by $l_{1\theta}(f_{\theta}(x_{\text{InD}}), y^{k}) = -\log \left(f_{\theta}(x_{\text{InD}})^{\top}y^{k} \right)$ satisfies
\begin{align}
    &||l_{1\theta}(f_{\theta}( x_{1}), y^{k_{1}}) - l_{1\theta}(f_{\theta}( x_{2}), y^{k_{2}})||_{2} 
    \notag\\
    \leq &\frac{\sqrt{2}}{m} ||(f_{\theta}( x_{1}), y^{k_{1}}) - (f_{\theta}( x_{2}), y^{k_{2}})||_{2}.
\end{align}
\label{propB-8}
\end{proposition}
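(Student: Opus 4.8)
The plan is to regard $l_{1\theta}$ as a scalar-valued function of the single concatenated vector $z=(f_\theta(x),y^k)\in\mathbb{R}^{2K}$ and to establish the Lipschitz estimate through a uniform bound on the Euclidean norm of its gradient. Writing $p=f_\theta(x)$ and $y=y^k$, I would first record that, because $y$ is one-hot, $p^\top y=p[k]$, so the loss equals $-\log(p[k])$ and is smooth wherever this coordinate stays positive. A direct differentiation gives $\nabla_p l_{1\theta}=-y/(p^\top y)$ and $\nabla_y l_{1\theta}=-p/(p^\top y)$, so that the gradient with respect to $z$ is $\nabla_z l_{1\theta}=-\tfrac{1}{p^\top y}(y,p)$ and hence $\|\nabla_z l_{1\theta}\|_2=\sqrt{\|y\|_2^2+\|p\|_2^2}\big/(p^\top y)$.

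The next step is to bound the three quantities appearing in this expression. Since $y$ is a one-hot vector, $\|y\|_2^2=1$; since $p$ is a softmax output with nonnegative entries summing to one, $\|p\|_2^2=\sum_i p[i]^2\leq\sum_i p[i]=1$, so the numerator is at most $\sqrt{2}$. For the denominator I would invoke the definition of $m$ from Lemma \ref{lemB-5}: because $m$ is the minimum entry over all predicted softmax scores, every coordinate of $p$ is at least $m$, whence $p^\top y=p[k]\geq m$. Combining these yields the pointwise bound $\|\nabla_z l_{1\theta}\|_2\leq\sqrt{2}/m$.

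To pass from a pointwise gradient bound to the Lipschitz inequality, I would integrate along the segment $z(t)=(1-t)(f_\theta(x_1),y^{k_1})+t(f_\theta(x_2),y^{k_2})$, $t\in[0,1]$, and apply the fundamental theorem of calculus together with Cauchy--Schwarz. The essential point to verify --- and the step I expect to be the main obstacle --- is that the gradient bound holds uniformly along this whole segment, not merely at its endpoints. This reduces to checking that the denominator stays bounded below by $m$. Convexity handles this cleanly: the interpolated prediction $p(t)$ remains a probability vector whose every coordinate is a convex combination of entries that are each $\geq m$, and the interpolated label $y(t)$ satisfies $\|y(t)\|_2^2\leq 1$, so $p(t)^\top y(t)=(1-t)\,p(t)[k_1]+t\,p(t)[k_2]\geq m$ on all of $[0,1]$. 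With the uniform bound in hand, the FTC gives $|l_{1\theta}(z_2)-l_{1\theta}(z_1)|\leq\int_0^1\|\nabla_z l_{1\theta}(z(t))\|_2\,\|z_2-z_1\|_2\,dt\leq(\sqrt{2}/m)\,\|z_2-z_1\|_2$, which is exactly the claimed estimate.
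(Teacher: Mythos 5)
Your proof is correct, but it takes a genuinely different route from the paper's. The paper works directly with the Lipschitz ratio: it lower-bounds the denominator $\|(f_{\theta}(x_{1}),y^{k_{1}})-(f_{\theta}(x_{2}),y^{k_{2}})\|_{2}$ by $\tfrac{\sqrt{2}}{2}\|f_{\theta}(x_{1})-f_{\theta}(x_{2})\|_{2}$ (discarding the label difference entirely), then bounds the resulting ratio by $\sqrt{2}\sup\|\nabla_{f_{\theta}(x)}l_{1\theta}\|_{2}\leq\sqrt{2}/m$, so the factor $\sqrt{2}$ is injected through the denominator manipulation and only the partial gradient in $p=f_{\theta}(x)$ is ever computed. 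You instead differentiate in the joint variable $z=(p,y)$, obtain $\|\nabla_{z}l_{1\theta}\|_{2}=\sqrt{\|y\|_{2}^{2}+\|p\|_{2}^{2}}/(p^{\top}y)\leq\sqrt{2}/m$, verify that the bound persists along the whole interpolating segment (the crucial checks being $\|y(t)\|_{2}^{2}=(1-t)^{2}+t^{2}\leq 1$ and $p(t)^{\top}y(t)\geq m$ by convexity), and integrate. Your version buys rigor where the paper is loose: the paper's step bounding the difference quotient over $\|p_{1}-p_{2}\|$ by $\sup\|\nabla_{p}l_{1\theta}\|$ is only a mean-value argument with $y$ held fixed, and it degenerates when $k_{1}\neq k_{2}$ but $f_{\theta}(x_{1})=f_{\theta}(x_{2})$ (the loss difference is then nonzero while the retained denominator vanishes); your joint-gradient treatment handles the label variation explicitly and closes that hole, arriving at the same constant $\sqrt{2}/m$ with the $\sqrt{2}$ now arising naturally from $\sqrt{\|y\|_{2}^{2}+\|p\|_{2}^{2}}$.
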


\begin{proof}
The cross-entropy loss is defined as
\begin{align}
    l_{1\theta}(f_{\theta}( x_{\text{InD}}), y^{k}) &= -\log \left(f_{\theta}( x_{\text{InD}})^{\top}y^{k} \right)
    \notag\\
    f_{\theta}( x_{\text{InD}})[i] &=\frac{\exp ({g}_{\theta}( x_{\text{InD}})[i])}{\sum_{j=1}^{K}\exp({g}_{\theta}( x_{\text{InD}})[j])}, 
    \label{EqB-10}
\end{align}
where $f_{\theta}( x_{\text{InD}})\in \mathbb{R}^{K}$ represents the output of softmax function, which is used to eliminate the possible $0$ elements and transform the raw output from the neural network ${g}_{\theta}( x_{\text{InD}})$ into discrete distribution.

We would like to dervie the Lipschitz constant $L_{l}$ for the cross-entropy loss, which satisfies
\begin{align}
    ||l_{1\theta}(f_{\theta}( x_{1}), y^{k_{1}}) - l_{1\theta}(f_{\theta}( x_{2}), y^{k_{2}})||_{2} 
    \notag\\
    \leq L_{l} ||(f_{\theta}( x_{1}), y^{k_{1}}) - (f_{\theta}( x_{2}), y^{k_{2}})||_{2}.
\end{align}

The value of $L_{l}$ can be expressed as 
\begin{align}
    L_{l} &= \sup_{\scriptsize \begin{aligned}
    f_{\theta}(x_{1}), y^{k_{1}}\\
    f_{\theta}(x_{2}), y^{k_{2}}
    \end{aligned}} \frac{||l_{1\theta}(f_{\theta}(x_{1}), y^{k_{1}}) - l_{1\theta}(f_{\theta}(x_{2})), y^{k_{2}})||_{2}}{||(f_{\theta}(x_{1}), y^{k_{1}}) - (f_{\theta}(x_{2}), y^{k_{2}})||_{2}}
    \notag\\
    & \leq \sup_{\scriptsize \begin{aligned}
    f_{\theta}(x_{1}), y^{k_{1}}\\
    f_{\theta}(x_{2}), y^{k_{2}}
    \end{aligned}}\frac{||l_{1\theta}(f_{\theta}(x_{1}), y^{k_{1}}) - l_{1\theta}(f_{\theta}(x_{2})), y^{k_{2}})||_{2}}{\frac{\sqrt{2}}{2}(||f_{\theta}( x_{1}) - f_{\theta}( x_{2})||_{2}^{2} + ||y^{k_{1}} - y^{k_{2}}||_{2}^{2})^{\frac{1}{2}}}
    \notag\\
    & \leq \sup_{\scriptsize \begin{aligned}
    f_{\theta}(x_{1}), y^{k_{1}}\\
    f_{\theta}(x_{2}), y^{k_{2}}
    \end{aligned}}\frac{||l_{1\theta}(f_{\theta}(x_{1}), y^{k_{1}}) - l_{1\theta}(f_{\theta}(x_{2})), y^{k_{2}})||_{2}}{\frac{\sqrt{2}}{2}||f_{\theta}( x_{1}) - f_{\theta}( x_{2})||_{2}}
    \notag\\
    & \leq \sqrt{2} \sup_{(f_{\theta}( x_{\text{InD}}), y^{k})}||\nabla_{f_{\theta}( x_{\text{InD}})}l_{1\theta}(f_{\theta}( x_{\text{InD}}), y^{k})||_{2}.
\end{align}

The gradient of $l_{1\theta}(f_{\theta}( x_{\text{InD}}), y^{k})$ with respect to $f_{\theta}( x_{\text{InD}})$ is
\begin{align}
    \nabla_{f_{\theta}( x_{\text{InD}})}l_{1\theta}(f_{\theta}( x_{\text{InD}}), y^{k}) &= \frac{\partial l_{1\theta}}{\partial f_{\theta}( x_{\text{InD}})}
    \notag\\
    &= - \left(\frac{1}{f_{\theta}( x_{\text{InD}})}\right)^{\top}y^{k} .
\end{align}

So that we have 
\begin{align}
    ||\nabla_{f_{\theta}(x_{\text{InD}})}l_{1\theta}(f_{\theta}(x_{\text{InD}}), y^{k})||_{2} \leq \frac{1}{\min_{x_{\text{InD}}, i} f_{\theta}(x_{\text{InD}})[i]} = \frac{1}{m}
\end{align}
\end{proof}
Combine the Lemma \ref{lemB-7} and the Proposition \ref{propB-8}, for the family of loss functions $\mathcal{L}_{1} = \{z \rightarrow l_{1}(f_{\theta}(x_{\text{InD}}), y^{k}): f_{\theta}\in\mathcal{F}\}$, we have $\mathcal{R}_{N_{\text{InD}} }(\mathcal{L}_{1}) = \mathcal{R}_{N_{\text{InD}} }(l_{1} \circ \mathcal{F}) \leq \frac{\sqrt{2}}{m}\mathcal{R}_{N_{\text{InD}} }(\mathcal{F})$. Substitute it into Equation (\ref{EqB-9}), we have
\begin{align}
    &\sup_{l_{1\theta} \in \mathcal{L}_{1}}\mathbb{E}[l_{1\theta}] - \hat{\mathbb{E}}_{\mathcal{D}_{\text{InD}} }[l_{1\theta}]
    \notag\\
    \leq &\frac{2\sqrt{2}}{m}\mathcal{R}_{N_{\text{InD}} }(\mathcal{F}) + \sqrt{\frac{(1-m)^{2}\log (1/\delta)}{2N_{\text{InD}} m^{2}}}
    \label{EqC-1}
\end{align}

To this point, we derived the uniform concentration bound of the cross-entropy loss. Then, we switch to the Wasserstein loss. Similarly, we at first define the space of Wasserstein loss induced by the hypothesis space $\mathcal{F}$ as
\begin{align}
    \mathcal{L}_{2} = \{l_{2\theta}: x_{\text{OOD}} \rightarrow \min_{k} W(f_{\theta}(x_{\text{OOD}}), y^{k}): f_{\theta}\in \mathcal{F}\}.
\end{align}

\begin{lemma}
    Suppose the constant $\alpha_{M} = \max M$ represents the maximum element in all distance matrices $M$, we have $0 \leq \min_{k}W(f_{\theta}(x_{\text{OOD}}), y^{k}) \leq \alpha_{M}$.
    \label{lemB-9}
\end{lemma}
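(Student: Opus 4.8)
The plan is to bound $\min_{k}W(f_{\theta}(x_{\text{OOD}}), y^{k})$ directly from the primal definition of the Wasserstein distance in Equation~(\ref{Eq3-1}), exploiting the fact that every feasible transport plan $P$ is a joint probability distribution carrying unit total mass. Both the lower and upper bounds will follow from elementary entrywise reasoning, so I expect no substantive difficulty.

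For the lower bound, I would observe that each feasible $P$ lies in $\mathbb{R}^{K\times K}_{+}$ and the cost matrix satisfies $M \in \mathbb{R}^{K\times K}_{+}$, so the objective $\langle P, M\rangle = \sum_{t_{1},t_{2}} P[t_{1},t_{2}]\,M[t_{1},t_{2}]$ is a sum of non-negative terms and hence non-negative. Taking the infimum over the feasible region $\Pi(f_{\theta}(x_{\text{OOD}}), y^{k})$ preserves this, so $W(f_{\theta}(x_{\text{OOD}}), y^{k}) \geq 0$ for every $k$, and therefore $\min_{k}W(f_{\theta}(x_{\text{OOD}}), y^{k}) \geq 0$.

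For the upper bound, the key step is to note that any feasible $P$ conserves total mass: from the constraint $P\mathbf{1}_{K} = y^{k}$ we obtain $\mathbf{1}_{K}^{\top} P \mathbf{1}_{K} = \mathbf{1}_{K}^{\top} y^{k} = 1$, since $y^{k}$ is a one-hot vector summing to one. Consequently, for every feasible $P$,
\begin{equation}
    \langle P, M\rangle = \sum_{t_{1},t_{2}} P[t_{1},t_{2}]\,M[t_{1},t_{2}] \leq \alpha_{M}\sum_{t_{1},t_{2}} P[t_{1},t_{2}] = \alpha_{M},
\end{equation}
where $\alpha_{M} = \max M$. Because this bound holds for \emph{every} feasible plan, the infimum inherits it, giving $W(f_{\theta}(x_{\text{OOD}}), y^{k}) \leq \alpha_{M}$ for each fixed $k$; taking the minimum over $k$ then yields $\min_{k}W(f_{\theta}(x_{\text{OOD}}), y^{k}) \leq \alpha_{M}$.

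I do not anticipate a genuine obstacle, as the result is a direct consequence of the non-negativity of $M$ and $P$ together with the unit total mass of any coupling. The only point warranting care is confirming that the feasible region is non-empty so the infimum is meaningful; this holds because $f_{\theta}(x_{\text{OOD}})$ and $y^{k}$ are probability vectors of equal total mass, and the explicit feasible plan described in Appendix~\ref{Appendix-A} (Equation~(\ref{Eq3-3})) provides a concrete feasible point.
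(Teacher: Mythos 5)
Your proof is correct and follows essentially the same approach as the paper: bound $\langle P, M\rangle$ by $\alpha_{M}$ times the unit total mass of the coupling, then pass to the infimum over plans and the minimum over $k$. You are in fact slightly more careful than the paper's own argument, which works with an assumed optimal $P^{*}$ and omits both the non-negativity lower bound and the non-emptiness of the feasible region that you explicitly address.
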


\begin{proof}
Suppose we have $P^{*} \in \Pi(f_{\theta}(x_{\text{OOD}}),y^{k})$ is the optimal solution of $W(f_{\theta}(x_{\text{OOD}}), y^{k})$, we have
\begin{align}
    \min_{k}W(f_{\theta}(x_{\text{OOD}}), y^{k}) \leq &W(f_{\theta}(x_{\text{OOD}}), y^{k})
    \notag\\
    \leq &\alpha_{M} \sum P^{*}
    \notag\\
    = &\alpha_{M}
\end{align}
\end{proof}

With Lemma \ref{lemB-9}, we have the uniform control of the difference between the empirical risk and the expected risk of Wasserstein loss.

\begin{theorem}
For all $\delta \in (0,1)$, with probability at least $1-\delta$, for all $l_{2\theta}\in \mathcal{L}_{2}$, we have 
\begin{align}
    \mathbb{E}[l_{2\theta}] - \hat{\mathbb{E}}_{\mathcal{D}_{\text{OOD}} }[l_{2\theta}]&\leq 2\mathcal{R}_{N_{\text{OOD}} }(\mathcal{L}_{2}) + \sqrt{\frac{\alpha_{M}^{2} \log (1/\delta)}{2N_{\text{OOD}} }}  
\end{align}
\end{theorem}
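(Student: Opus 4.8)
The plan is to mirror exactly the argument used above for the cross-entropy family $\mathcal{L}_1$, substituting the boundedness guarantee of Lemma \ref{lemB-9} for that of Lemma \ref{lemB-5}. First I would note that for any $l_{2\theta} \in \mathcal{L}_2$ the empirical and population expectations reduce to $\hat{\mathbb{E}}_{\mathcal{D}_{\text{OOD}}}[l_{2\theta}] = \hat{R}_{2, \mathcal{D}_{\text{OOD}}}(f_\theta)$ and $\mathbb{E}[l_{2\theta}] = R_2(f_\theta)$, and then introduce the one-sided uniform deviation
\begin{equation}
    \Phi(\mathcal{D}_{\text{OOD}}) = \sup_{l_{2\theta} \in \mathcal{L}_2} \mathbb{E}[l_{2\theta}] - \hat{\mathbb{E}}_{\mathcal{D}_{\text{OOD}}}[l_{2\theta}].
\end{equation}
The target bound is then obtained by controlling $\Phi$ in two stages: a concentration step around its mean, followed by a symmetrization step bounding that mean by the Rademacher complexity.

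For the concentration step, I would verify the bounded-differences (McDiarmid) condition for $\Phi$. Replacing a single OOD sample $x_{\text{OOD}}$ by $x_{\text{OOD}}'$ changes any empirical average by at most $\frac{1}{N_{\text{OOD}}}\left|l_{2\theta}(x_{\text{OOD}}') - l_{2\theta}(x_{\text{OOD}})\right|$, and Lemma \ref{lemB-9} guarantees $0 \leq l_{2\theta} \leq \alpha_M$, so this quantity is at most $\frac{\alpha_M}{N_{\text{OOD}}}$; taking suprema preserves the bound, giving $\left|\Phi(\mathcal{D}_{\text{OOD}}) - \Phi(\mathcal{D}_{\text{OOD}}')\right| \leq \frac{\alpha_M}{N_{\text{OOD}}}$. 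Applying Theorem \ref{themB-2} with $C = \frac{\alpha_M}{N_{\text{OOD}}}$ and $N = N_{\text{OOD}}$ and inverting the tail at level $\delta$ then yields
\begin{equation}
    \Phi(\mathcal{D}_{\text{OOD}}) \leq \mathbb{E}[\Phi(\mathcal{D}_{\text{OOD}})] + \sqrt{\frac{\alpha_M^2 \log(1/\delta)}{2N_{\text{OOD}}}}
\end{equation}
with probability at least $1-\delta$.

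For the symmetrization step, I would introduce a ghost sample $\mathcal{D}_{\text{OOD}}''$, write $\mathbb{E}[\Phi(\mathcal{D}_{\text{OOD}})]$ as a double expectation over the true and ghost draws, pull the supremum outside by Jensen, and insert Rademacher signs $\sigma_i$, which is valid because swapping the $i$th true and ghost samples leaves the joint distribution unchanged. This reproduces the chain culminating in Equation (\ref{EqB-8}), splitting the signed sum into two identical empirical Rademacher averages and giving $\mathbb{E}[\Phi(\mathcal{D}_{\text{OOD}})] \leq 2\mathcal{R}_{N_{\text{OOD}}}(\mathcal{L}_2)$. Combining the two displays completes the proof.

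I do not anticipate a genuine obstacle, since the argument is structurally identical to the $\mathcal{L}_1$ case; the only substantive input specific to the Wasserstein loss is its uniform boundedness by $\alpha_M$ (Lemma \ref{lemB-9}), which replaces the $\frac{1-m}{m}$ envelope used for cross-entropy. The single point requiring a little care is that $l_{2\theta}$ carries the inner minimization $\min_k$ over InD labels inside $W$; but because Lemma \ref{lemB-9} already bounds the loss \emph{after} that minimum is taken, the bounded-differences constant is unaffected, and the symmetrization argument never needs to open up the internal structure of $l_{2\theta}$, treating each evaluation $l_{2\theta}(x_{\text{OOD}})$ as a single real number in $[0,\alpha_M]$.
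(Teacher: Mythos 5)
Your proposal is correct and follows essentially the same route as the paper's own proof: the same reduction to the uniform deviation $\Phi$, the same bounded-differences constant $\alpha_M/N_{\text{OOD}}$ obtained from Lemma \ref{lemB-9}, the same application of McDiarmid's inequality, and the same ghost-sample symmetrization yielding $\mathbb{E}[\Phi(\mathcal{D}_{\text{OOD}})] \leq 2\mathcal{R}_{N_{\text{OOD}}}(\mathcal{L}_2)$. Your closing remark that the inner $\min_k$ never needs to be opened up because the boundedness holds after the minimization is exactly the point on which the paper's argument also relies.
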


\begin{proof}
For any $l_{2\theta} \in \mathcal{L}_{2}$, the empirical expectation can be reduced into the empirical risk of the corresponding $f_{\theta}$, which is
\begin{align}
    \hat{\mathbb{E}}_{\mathcal{D}_{\text{OOD}} }[l_{2\theta}] = \frac{1}{N_{\text{OOD}} }\sum_{i=1}^{N_{\text{OOD}} }\min_{k} W(f_{\theta}(x_{\text{OOD}}), y^{k}) = \hat{R}_{2, \mathcal{D}_{\text{OOD}} }(f_{\theta}).
\end{align}

We also have $\mathbb{E}[l_{2\theta}] = R_{2}(f_{\theta})$. Let
\begin{align}
    \Phi(\mathcal{D}_{\text{OOD}} ) = \sup_{l_{2\theta} \in \mathcal{L}_{2}}\mathbb{E}[l_{2\theta}] - \hat{\mathbb{E}}_{\mathcal{D}_{\text{OOD}} }[l_{2\theta}].
\end{align}

Let $\mathcal{D}_{\text{OOD}}^{'  }$ denote the $i^{th}$ sample of $\mathcal{D}_{\text{OOD}} $ is replaced by $x_{\text{OOD}}^{'}$, we have
\begin{align}
    &\;\;\Phi(\mathcal{D}_{\text{OOD}} ) - \Phi(\mathcal{D}_{\text{OOD}}^{'  }) 
    \notag\\
    \leq & \sup_{l_{2\theta} \in \mathcal{L}_{2}} \hat{\mathbb{E}}_{\mathcal{D}_{\text{OOD}}^{'  }}[l_{2\theta}] - \hat{\mathbb{E}}_{\mathcal{D}_{\text{OOD}} }[l_{2\theta}]
    \notag\\
    = & \sup_{f_{\theta}\in \mathcal{F}} \frac{\min_{k} W(f_{\theta}(x_{\text{OOD}}^{'}), y^{k}) - \min_{k} W(f_{\theta}(x_{\text{OOD}}), y^{k})}{N_{\text{OOD}} }
    \notag\\
    \leq &\frac{\alpha_{M}}{ N_{\text{OOD}} }.
\end{align}

Similarly we have $\Phi(\mathcal{D}_{\text{OOD}}^{'  }) - \Phi(\mathcal{D}_{\text{OOD}} ) \leq \frac{\alpha_{M}}{ N_{\text{OOD}} }$, thus $\left|\Phi(\mathcal{D}_{\text{OOD}}^{'  }) - \Phi(\mathcal{D}_{\text{OOD}} )\right| \leq \frac{\alpha_{M}}{ N_{\text{OOD}} }$. By Theorem \ref{themB-2}, we have 
\begin{align}
    \Phi(\mathcal{D}_{\text{OOD}} ) \leq \mathbb{E}[\Phi(\mathcal{D}_{\text{OOD}} )] + \sqrt{\frac{\alpha_{M}^{2} \log (1/\delta)}{2N_{\text{OOD}} }}.
    \label{EqB-11}
\end{align}

Next, we need to bound $\mathbb{E}[\Phi(\mathcal{D}_{\text{OOD}} )]$, suppose $\mathcal{D}_{\text{OOD}}^{''  } = \{x_{1}^{''},...,x^{''}_{N_{\text{OOD}} }\}$ is another sequence of ghost samples, we have
\begin{align}
    &\mathbb{E}_{\mathcal{D}_{\text{OOD}} }[\Phi(\mathcal{D}_{\text{OOD}} )]
    \notag\\
    =& \mathbb{E}_{\mathcal{D}_{\text{OOD}} }\left[\sup_{l_{2\theta} \in \mathcal{L}_{2}}\mathbb{E}[l_{2\theta}] - \hat{\mathbb{E}}_{\mathcal{D}_{\text{OOD}} }[l_{2\theta}]\right]
    \notag\\
    =& \mathbb{E}_{\mathcal{D}_{\text{OOD}} }\left[\sup_{l_{2\theta} \in \mathcal{L}_{2}}\mathbb{E}_{\mathcal{D}_{\text{OOD}}^{'' }}\left[\hat{\mathbb{E}}_{\mathcal{D}_{\text{OOD}}^{'' }}[l_{2\theta}] - \hat{\mathbb{E}}_{\mathcal{D}_{\text{OOD}} }[l_{2\theta}]\right]\right]
    \notag\\
    \leq & \mathbb{E}_{\mathcal{D}_{\text{OOD}} , \mathcal{D}_{\text{OOD}}^{''}}\left[\sup_{l_{2\theta} \in \mathcal{L}_{2}}\hat{\mathbb{E}}_{\mathcal{D}_{\text{OOD}}^{''}}[l_{2\theta}] - \hat{\mathbb{E}}_{\mathcal{D}_{\text{OOD}} }[l_{2\theta}]\right].
    \label{EqB-12}
\end{align}

We further examine the difference of empirical averages, $\hat{\mathbb{E}}_{\mathcal{D}_{\text{OOD}}^{''}}[l_{2\theta}] - \hat{\mathbb{E}}_{\mathcal{D}_{\text{OOD}} }[l_{2\theta}]$. Suppose we have two new sets, $\mathcal{S}$ and $\mathcal{S}^{'}$, in which the $i^{th}$ data points in sets $\mathcal{D}_{\text{OOD}} $ and $\mathcal{D}_{\text{OOD}}^{''}$ are swapped with the probability of $\frac{1}{2}$. We have $\hat{\mathbb{E}}_{\mathcal{D}_{\text{OOD}}^{''}}[l_{2\theta}] - \hat{\mathbb{E}}_{\mathcal{D}_{\text{OOD}} }[l_{2\theta}]$ and $\hat{\mathbb{E}}_{\mathcal{S}^{'}}[l_{2\theta}] - \hat{\mathbb{E}}_{\mathcal{S}}[l_{2\theta}]$ has the same distribution, which is because all the samples are independent and identically distributed and permutation does not change the distribution. So that we have
\begin{align}
    &\hat{\mathbb{E}}_{\mathcal{D}_{\text{OOD}}^{''}}[l_{2\theta}] - \hat{\mathbb{E}}_{\mathcal{D}_{\text{OOD}} }[l_{2\theta}] 
    \notag\\
    &= \frac{1}{N_{\text{OOD}} }\sum_{j=1}^{N_{\text{OOD}} }\left(l_{2\theta}(x_{\text{OOD}}^{''}) - l_{2\theta}(x_{\text{OOD}})\right),
\notag\\
    &\hat{\mathbb{E}}_{\mathcal{S}^{'}}[l_{2\theta}] - \hat{\mathbb{E}}_{\mathcal{S}}[l_{2\theta}] 
    \notag\\
    &= \frac{1}{N_{\text{OOD}} }\sum_{j=1}^{N_{\text{OOD}} }\sigma_{j}\left(l_{2\theta}(x_{\text{OOD}}^{''}) - l_{2\theta}(x_{\text{OOD}})\right),
    \label{EqB-13}
\end{align}

where $\sigma_{j}$ is the Rademacher variables introduced in Definition \ref{defB-3}, which means the $i^{th}$ samples in sets $\mathcal{S}^{'}$ and $\mathcal{S}$ are swapped with the probability of $\frac{1}{2}$. Substitute the Equation (\ref{EqB-13}) into Equation (\ref{EqB-12}), we have
\begin{align}
    &\mathbb{E}_{\mathcal{D}_{\text{OOD}} }[\Phi(\mathcal{D}_{\text{OOD}} )] 
    \notag\\
    \leq &\mathbb{E}_{\mathcal{D}_{\text{OOD}} , \mathcal{D}_{\text{OOD}}^{''}, \sigma}\left[\sup_{l_{2\theta} \in \mathcal{L}_{2}}\frac{\sum_{i=1}^{N_{\text{OOD}} }\sigma_{j}\left(l_{2\theta}(x_{\text{OOD}}^{''}) - l_{2\theta}(x_{\text{OOD}})\right)}{N_{\text{OOD}} }\right]
    \notag\\
    \leq &\mathbb{E}_{\mathcal{D}_{\text{OOD}}^{''}, \sigma}\left[\sup_{l_{2\theta} \in \mathcal{L}_{2}}\frac{1}{N_{\text{OOD}} }\sum_{j=1}^{N_{\text{OOD}} }\sigma_{j}l_{2\theta}(x_{\text{OOD}}^{''})\right] 
    \notag\\
    &+ \mathbb{E}_{\mathcal{D}_{\text{OOD}} , \sigma}\left[\sup_{l_{2\theta} \in \mathcal{L}_{2}}\frac{1}{N_{\text{OOD}} }\sum_{j=1}^{N_{\text{OOD}} }-\sigma_{j}l_{2\theta}(x_{\text{OOD}})\right]
    \notag\\
    = &\mathbb{E}_{\mathcal{D}_{\text{OOD}}^{''}}[\hat{\mathcal{R}}_{\mathcal{D}_{\text{OOD}}^{''}}(\mathcal{\mathcal{L}}_{2})] + \mathbb{E}_{\mathcal{D}_{\text{OOD}} }[\hat{\mathcal{R}}_{\mathcal{D}_{\text{OOD}} }(\mathcal{\mathcal{L}}_{2})]
    \notag\\
    = &2\mathcal{R}_{N_{\text{OOD}} }(\mathcal{L}_{2}).
    \label{EqB-14}
\end{align}
By combining Equations (\ref{EqB-11}, \ref{EqB-14}), we have 
\begin{align}
    \Phi(\mathcal{D}_{\text{OOD}} ) &\leq \mathbb{E}[\Phi(\mathcal{D}_{\text{OOD}} )] + \sqrt{\frac{\alpha_{M}^{2}\log (1/\delta)}{2N_{\text{OOD}} }}
    \notag\\
    &\leq 2\mathcal{R}_{N_{\text{OOD}} }(\mathcal{L}_{2}) + \sqrt{\frac{\alpha_{M}^{2}\log (1/\delta)}{2N_{\text{OOD}} }}.
    \label{EqB-15}
\end{align}
\end{proof}

\begin{proposition}[Proposition B.10 of \cite{NIPS2015_a9eb8122}]
For all $x_{1}, x_{2} \in \mathcal{D}_{\text{OOD}}$, the Wasserstein loss defined by $l_{2\theta}(f(x_{\text{OOD}})) = \min_{k}W(f(x_{\text{OOD}}), y^{k})$ satisfies
\begin{align}
    &||l_{1\theta}(f_{\theta}(x_{1})) - l_{1\theta}(f_{\theta}(x_{2}))||_{2} 
    \notag\\
    \leq &4\alpha_{M}||f_{\theta}(x_{1}) - f_{\theta}(x_{2})||_{2}.
\end{align}
\label{propB-11}
\end{proposition}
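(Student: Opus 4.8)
The plan is to prove the stated Lipschitz estimate for the Wasserstein loss, noting that the left-hand side of the displayed inequality should read $l_{2\theta}$ rather than $l_{1\theta}$, i.e.\ the claim concerns $l_{2\theta}(f_\theta(x)) = \min_k W(f_\theta(x), y^k)$. First I would dispose of the outer minimization: the minimum of finitely many $L$-Lipschitz functions is again $L$-Lipschitz, so it suffices to show that for each fixed class $k$ the map $p \mapsto W(p, y^k)$ is $4\alpha_M$-Lipschitz in the $\ell_2$ norm on the probability simplex, where I write $p := f_\theta(x)$. The argument then assembles the per-class bound and reinstates the $\min_k$ at the end.

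Next I would pass to Kantorovich/LP duality. By Equation~(\ref{Eq4-1}), $W(p, y^k) = \sup\{a^\top y^k + b^\top p : a[t_1] + b[t_2] \le M[t_1,t_2]\}$, and the feasible set of $(a,b)$ does not depend on the marginal $p$. Hence if $(a_1,b_1)$ attains the supremum at $p_1$, it remains feasible at $p_2$, giving $W(p_1, y^k) - W(p_2, y^k) \le \langle b_1, p_1 - p_2\rangle$, and the symmetric inequality yields $|W(p_1,y^k) - W(p_2,y^k)| \le \max(\|\tilde b_1\|_2, \|\tilde b_2\|_2)\,\|p_1 - p_2\|_2$ after Cauchy--Schwarz. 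Because $p_1$ and $p_2$ are probability vectors, $p_1 - p_2$ is orthogonal to $\mathbf{1}_K$, so I may replace the optimal potential $b$ by its centered version $\tilde b = b - (\mathbf{1}_K^\top b / K)\mathbf{1}_K$ without changing the inner product. The whole task therefore reduces to bounding $\|\tilde b\|_2$ by $4\alpha_M$.

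For that bound I would exploit the fact that the target $y^k$ is one-hot. As recorded in Equation~(\ref{Eq3-3}), when one marginal is one-hot the coupling $P$ is uniquely determined, so $W(p, y^k)$ collapses to an explicit elementary function of $p$: it is the linear functional $1 - p[k]$ for the binary matrix, and $1 - \|p\|_2^2$ for the dynamic matrix by Propositions~\ref{Propert-1}--\ref{Propert-2}. In both cases the relevant (sub)gradient has $\ell_2$ norm at most $2\alpha_M$: the centered binary row $\mathbf{1}_K - e_k - \tfrac{K-1}{K}\mathbf{1}_K$ has norm strictly below $\alpha_M$, while the dynamic gradient $-2p$ has norm $2\|p\|_2 \le 2\alpha_M$. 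I would then reinstate the outer minimum via the first-step reduction, invoking Lemma~\ref{lemB-9} only to confirm the loss itself stays in $[0,\alpha_M]$, and conclude with the (comfortably loose) constant $4\alpha_M$.

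The main obstacle is obtaining a \emph{dimension-free} constant for $\|\tilde b\|_2$. The generic oscillation estimate $\max_j b[j] - \min_j b[j] \le \alpha_M$, which follows from the $c$-transform identity $b[j] = \min_i (M[i,j] - a[i])$, only controls $\|\tilde b\|_\infty$ and hence gives the useless $\ell_2$ bound $\sqrt{K}\,\alpha_M$. What rescues a $K$-independent constant here is precisely the one-hot structure of $y^k$, which forces a single admissible coupling and makes the potential explicit; for a general (non one-hot) target the same statement would instead require the sharper vector-contraction argument of the cited Proposition~B.10 of~\cite{NIPS2015_a9eb8122}. Tracking the constants through these explicit computations — and verifying that the claimed factor $4$ is loose relative to the $2\alpha_M$ one actually obtains — is the remaining routine bookkeeping.
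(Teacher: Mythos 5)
The first thing to note is that the paper does not prove this proposition at all: it is imported by citation as Proposition~B.10 of \cite{NIPS2015_a9eb8122} and used purely as a black box to feed Talagrand's Lemma~\ref{lemB-7}. Your proposal is therefore necessarily a different route --- a self-contained proof --- and for the two cost matrices the paper actually uses it is essentially sound. Your opening reduction (a pointwise minimum of $L$-Lipschitz functions is $L$-Lipschitz) is correct, and your key lever is the right one: because $y^k$ is one-hot, the coupling is the unique $P^*$ of Equation~(\ref{Eq3-3}), so $W(p,y^k)$ collapses to an explicit elementary function of $p=f_\theta(x)$, namely $1-p[k]$ for $M_{\text{Bi}}$ and $1-\|p\|_2^2$ for $M_{\text{Dy}}$ (Propositions~\ref{Propert-1}--\ref{Propert-2}), whose gradients are bounded independently of $K$. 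This also makes the duality detour through Equation~(\ref{Eq4-1}) dead weight --- as you yourself observe, the $c$-transform oscillation bound only controls $\|\tilde b\|_\infty$ and yields the useless $\sqrt{K}\,\alpha_M$ --- so the argument would be cleaner with that paragraph removed. What the citation buys the paper is generality and brevity; what your argument buys is transparency and, for these specific matrices, a visibly sharper constant.

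Two caveats. First, a small slip in the constant chase for the dynamic matrix: $\|-2p\|_2=2\|p\|_2\le 2$, but $\alpha_{M_{\text{Dy}}}=\max\left(\max_i p[i],\,1-\min_i p[i]\right)$ need not dominate $\|p\|_2$, so your claimed bound $2\|p\|_2\le 2\alpha_M$ does not follow directly; what does follow is $\alpha_{M_{\text{Dy}}}\ge 1-\frac{1}{K}\ge\frac{1}{2}$ for $K\ge 2$, hence $2\|p\|_2\le 2\le 4\alpha_M$, which still delivers the stated constant. Second, and more substantively, your proof establishes the inequality only for $M_{\text{Bi}}$ and $M_{\text{Dy}}$, whereas the statement (and its use in Theorem~\ref{them4-3}, where $\alpha_M$ is the maximum entry over all matrices $M$) is phrased for a general cost matrix. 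The one-hot reduction alone does not rescue the general case: for a fixed $M$ the per-class map is the linear functional $p\mapsto\sum_j p[j]M[k,j]$, and for an arbitrary (even metric) matrix with entries in $[0,\alpha_M]$ its $\ell_2$-Lipschitz constant on the simplex can grow like $\sqrt{K}\,\alpha_M$ --- take half the off-diagonal entries of row $k$ near $\alpha_M$ and half near $0$, and perturb $p$ along the corresponding signed direction. So a dimension-free $\ell_2$ bound for general $M$ genuinely requires the machinery of the cited Proposition~B.10 or an extra structural assumption; within the paper's actual instantiations, however, your proof suffices and is more informative than the citation.
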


Combine the Lemma \ref{lemB-7} and the Proposition \ref{propB-11}, for the family of loss functions $\mathcal{L}_{2} = \{l_{2\theta}: x_{\text{OOD}} \rightarrow \min_{k} W(f_{\theta}(x_{\text{OOD}}), y^{k}): f_{\theta}\in \mathcal{F}\}$, we have $\mathcal{R}_{N_{\text{OOD}} }(\mathcal{L}_{2}) = \mathcal{R}_{N_{\text{OOD}} }(l_{2} \circ \mathcal{F}) \leq 4\alpha_{M}\mathcal{R}_{N_{\text{OOD}} }(\mathcal{F})$. Substitute it into Equation (\ref{EqB-15}), we have
\begin{align}
    &\sup_{l_{2\theta} \in \mathcal{L}_{2}}\mathbb{E}[l_{2\theta}] - \hat{\mathbb{E}}_{\mathcal{D}_{\text{OOD}} }[l_{2\theta}]
    \notag\\
    \leq &8\alpha_{M}\mathcal{R}_{N_{\text{OOD}} }(\mathcal{F}) + \sqrt{\frac{\alpha_{M}^{2}\log (1/\delta)}{2N_{\text{OOD}} }}
    \label{EqC-2}
\end{align}

The proof of Theorem \ref{them4-3} is completed by combining Lemma \ref{lemB-1}, Equations (\ref{EqC-1}), and (\ref{EqC-2}).




\end{document}